\newcommand{\bm}[1]{\mathbf{#1}}
\newcommand{\tsetsize}{n}
\newcommand{\esize}{e}
\newcommand{\labelseq}{\mathbf{y}}
\newcommand{\binseq}{\mathbf{b}}
\newcommand{\binvar}{B}
\newcommand{\labelvar}{Y}
\newcommand{\predfun}{f}
\newcommand{\lalg}{\mathcal{A}}
\newcommand{\posc}{w}
\newcommand{\uval}{U}
\newcommand{\ureal}{u}
\newcommand{\uu}{W}
\newcommand{\maxulight}{L}
\newcommand{\johnson}{J}
\newtheorem{fct}{Fact}
\newtheorem{theorem}{Theorem}
\newtheorem{lemma}{Lemma}
\newtheorem{proposition}{Proposition}
\newtheorem{corollary}{Corollary}
\newtheorem{definition}{Definition}
\newtheorem{remark}{Remark}
\newtheorem{example}{Example}
\newcommand{\cwwset}{S}
\newcommand{\indset}{\bm{x}}
\newcommand{\hdist}{H}
\newcommand{\cwc}{A}
\newcommand{\degree}{d}
\newcommand{\vecset}{C}
\newcommand{\ucount}{R}
\newcommand{\kernel}{k}
\newcommand{\critval}{c}
\begin{document}

\title{A Link between Coding Theory and Cross-Validation with Applications}

\author[1]{Tapio Pahikkala \thanks{ aatapa@utu.fi}}
	\author[1]{Parisa Movahedi}
	\author[1]{Ileana Montoya Perez} 
	\author[2]{Havu Miikonen}
	\author[1,3]{Stephan Foldes}
\author[1]{Antti Airola}
	\author[1]{Laszlo Major}

\affil[1]{Department of Computing,
University of Turku, Finland}
\affil[2]{Department of Mathematics and Statistics, University of Turku, Finland}
\affil[3]{University of Miskolc, Hungary}
\maketitle

\begin{abstract}
How many different binary classification problems a single learning algorithm can solve on a fixed data with exactly zero or at most a given number of cross-validation errors? While the number in the former case is known to be limited by the no-free-lunch theorem, we show that the exact answers are given by the theory of error detecting codes. As a case study, we focus on the AUC performance measure and leave-pair-out cross-validation (LPOCV), in which every possible pair of data with different class labels is held out at a time. We show that the maximal number of classification problems with fixed class proportion, for which a learning algorithm can achieve zero LPOCV error, equals the maximal number of code words in a constant weight code (CWC), with certain technical properties. We then generalize CWCs by introducing light CWCs, and prove an analogous result for nonzero LPOCV errors and light CWCs. Moreover, we prove both upper and lower bounds on the maximal numbers of code words in light CWCs. Finally, as an immediate practical application, we develop new LPOCV based randomization tests for learning algorithms that generalize the classical Wilcoxon-Mann-Whitney U test.
\end{abstract}


\section{Introduction}

Cross-validation is a commonly used method for estimating prediction performance in supervised learning when the amount of available data is limited. With cross-validation, a natural question is whether it would be possible to obtain equally good or better results if the labels of the data were randomly assigned. In this work, we seek a general answer to this question such that would hold over all possible learning algorithms for the specific case of area under ROC curve (AUC) estimation via \textbf{leave-pair-out cross-validation} (LPOCV) (see e.g. \citet{airola2011experimental}). Similarly to the infamous \textbf{no-free-lunch (NFL) theorems} (originally formulated by \citet{Wolpert1992connection,Wolpert1996nfl}, see also \citet{wolpert2021important,sterkenburg2021no} for recent comprehensive reviews), we show that the number of classification problems for which a machine learning method has exactly zero LPOCV errors can be bounded by the maximal sizes of error detecting codes of the same code word length as the amount of data in LPOCV. Furthermore, we establish bounds on the maximum number of classification problems for which a machine learning method incurs only a specified number of LPOCV errors. These results allow designing new kinds of tests of statistical significance for AUC-estimation via cross-validation that hold for all possible learning algorithms. More generally, they provide novel ways to analyze learning algorithms in terms of their cross-validation error distributions.

We consider the following NFL framework that is a specialization of the one considered by \citet{Wolpert1996nfl,wolpert2021important}. We also refer to \citet{ho2002simple,lattimore2013no} for similar setups. We start from an arbitrary sequence $\indset=(x_1,\ldots,x_\tsetsize)$ of inputs, that can be basically anything, ranging from images or text documents to persons.\footnote{The indices of the sequence can be interpreted, for example, as the observation order of the data, and $\tsetsize$ as the overall number of observations, including the ones done in the future, but the indexing can in principle be any arbitrary fixed order of data. We only consider here a fixed sequence of finite length, because in the real world we only observe finite amounts of data (see e.g. \citet{Wolpert2002supervisednfl} for a similar reasoning).} The sequence can in principle have multiple occurrences of the same input but this is here considered irrelevant, since we always refer to the sequence entries by their indices that are distinct by their definition. This sequence is associated with a binary vector valued random variable $\labelvar$ of length $\tsetsize$.
Using the terminology by \citet{wolpert2021important}, its realization $\labelseq$, the sequence of binary class labels of the $\tsetsize$ data, is drawn from some unknown \textbf{distribution of learning problems} (DLP). In this context, the DLP can hence be considered simply as a probability mass function over all possible binary sequences of length $\tsetsize$.

We further assume that a subsequence of $\labelseq$ of nonzero length $\esize$ is hidden from the experimenter and the remaining $\tsetsize-\esize$ are revealed. The experimenter learns from the sequence of data and the revealed labels a hypothesis about the hidden labels. 
The learning is carried out by a \textbf{machine learning algorithm}, that can be considered as a mapping from the data and known labels, referred to as training data in the literature, to the sequence of $\esize$ predictions of the unknown labels.\footnote{In the usual setup in machine learning, also the inputs corresponding to the unknown labels are assumed to be unknown during the training phase and only revealed when the prediction is carried out. However, this technical detail is irrelevant for the forthcoming analysis and we pay no further attention to it.} Unless stated otherwise, we assume the learning algorithms to be both deterministic and symmetric. This indicates that the predictions are invariant to both re-running the learning algorithm and permuting the training data.

The simplest and most popularly used way for measuring how well the predictions match with the unknown labels is to count the fraction of incorrectly classified data with hidden labels, referred to as the \textbf{classification error}. Below, we shift our focus on the error based on the area under ROC curve, but we first consider the classification error in order to link our research with relevant literature. The \textbf{expected error} of the learning algorithm, with regard to the underlying DLP, is the error on the data with hidden labels averaged over all possible learning problems weighed by their probabilities in DLP. 

Next, we present a practical example that resembles the ones often used to illustrate the NFL theorem for machine learning.

\begin{example}
In a book recommendation mobile app
the sequence of data $\indset$ correspond to different books and the binary labeling of data indicates whether the user of the app likes a book or not. The user has input their preferences
for a subset of books, and the learning task is to predict these for the remaining ones. Due to privacy regulation no information about the preferences of any other user is available for the app, but rather the task needs to be solved for each user independently. The labeling corresponding to a single user is drawn from the underlying DLP, that can thus be interpreted as the distribution of different users' preferences over the books, so that some label sequences are more likely than others. 


Roughly put, a ML algorithm is good for this family of problems if it is able to learn well such user preferences that have a high probability of being drawn from the underlying DLP. Indeed, it is possible for a single learning algorithm to perfectly learn to predict the preferences for several different label sequences. From a training sequence of size $\tsetsize-\esize$, the maximum amount is $2^{\tsetsize-\esize}$ if this many users happen to differ in their preferences on books with revealed labels. Accordingly, one can interpret a learning algorithm as a catalog of $2^{\tsetsize-\esize}$ key-value pairs with keys corresponding to the subsequence of known labels and the values to the predictions of the remaining labels.
\end{example}

According to NFL, if DLP is equal to a uniform distribution, indicating that all $2^\tsetsize$ possible binary classification problems have an equal probability, then learning is impossible in terms of expected classification error. Indeed, it is straightforward to see that the expected classification error for any learning algorithm is exactly $0.5$, when one averages it uniformly over all $2^\tsetsize$ binary labelings. In particular, if the support of the underlying DLP would cover only a pair of learning problems having the same labels for the training data but exactly opposite labels for the remaining data, then the expected classification error of any learning algorithm on this pair of problems is 0.5. In contrast, if the underlying DLP has different probabilities for at least one pair of learning problems of the considered type, then there exists a learning algorithm with better than $0.5$ expected classification error. Accordingly, again using the terminology by \citet{wolpert2021important}, such a learning algorithm is \textbf{well-aligned} with the DLP it is facing. Meanwhile, the algorithm that would do exactly opposite predictions would be badly aligned with the DLP. Taking this consideration even further, if the support of DLP does not include any pair of problems such that would have the same labeling of training data but different for the remaining data, then there exists a learning algorithm perfectly aligned with the DLP, such that always outputs correct predictions with any labeling of the training data.

How the predictions are determined for the data with unknown labels based on the labeled data, is in the machine learning literature called inductive bias of the learning algorithm. \citet{mitchell1980need} defines inductive bias as ``any basis for choosing one generalization over another, other than strict consistency with the instances''. While our main focus in this paper is on the deterministic learning algorithms, we here mention for the sake of completeness the non-deterministic learning algorithm with the so-called maximum entropy prior. With this, we refer to the only learning algorithm with no inductive bias whatsoever, indicating that all predictions for the unknown labels are equally probable with all training data. With this algorithm,
learning is impossible, since no training set can have any effect on the inferred hypothesis that always treats both labels as equally probable for any data. Hence, this learning algorithm is the only one that is not well-aligned with any DLP. Thus, learning can be unsuccessful due to the properties of either DLP or learning algorithm. Beside these extremities, all learning algorithms are at least somewhat well aligned with some family of DLPs and at least somewhat badly aligned with the rest.

An important variation of this setup and also the main focus of this paper is the special case, in which the support of the underlying DLP for sequences of $\tsetsize$ data covers only label sequences that have exactly $\posc$ nonzero and $\tsetsize-\posc$ zero entries. Such binary sequences are in the literature referred to have a \textbf{constant weight}, since their Hamming weight, the sum of their entries, is a constant. We refer to a DLP for label sequences of length $\tsetsize$ with such a weight constraint as \textbf{distribution of constant weight learning problems (DCWLP)}. Moreover, we say that DCWLP of length $\tsetsize$ and weight $\posc$ is uniform if all learning problems covered by it (e.g. all binary label sequences of length $\tsetsize$ and weight $\posc$) are equally likely.

If $\posc\neq \tsetsize-\posc$, then a classifier that always predicts the majority class, obtains better than $0.5$ classification error when averaged over uniform DCWLP. This is of course perfectly in line with the NFL results, since this type of majority classifiers are well-aligned with this subset of learning problems. However, their classification errors tell nothing on how well they can distinguish the two classes from each other. Therefore, a more appropriate and popular way for measuring the discriminatory power in this case is what
is in the literature referred to as the \textbf{area under a receiver operating characteristic curve (AUC)} (see e.g.  \citet{hanley82auc}).
When AUC is considered, we assume that real-valued predictions are made for the data with unknown labels rather than binary classifications. AUC is often interpreted as the probability of the predicted real value for a randomly chosen observation labeled with 1 being larger than that for a randomly chosen observation labeled with 0. Accordingly, AUC can be conveniently expressed in terms of what we refer to a \textbf{pairwise error} on differently labeled data. Let us define
\begin{align}\label{pairwiseerror}
\kernel_\predfun(\binseq,i,j)=
\left\{
\begin{array}{cc}
	1&\predfun(i)<\predfun(j)\\
	0.5&\predfun(i)=\predfun(j)\\
	0&\predfun(i)>\predfun(j)
\end{array}
\right.\;,
\end{align}
where $\binseq$ is a binary sequence, $(i,j)$ is any pair of indices such that $b_i=1$ and $b_j=0$, and $\predfun$ is a  prediction function that maps the indices to real values.\footnote{The usual practice in machine learning is to define prediction functions as functions of the inputs $x$ rather than their indices but we decided to adapt the latter to reduce unnecessary complexity in notation.}
Then, AUC is the average value of $1-\kernel_\predfun$ over all index pairs conformable with $\binseq$. For example, if evaluated over the hidden subsequence of labels of length $\esize$, AUC is the average over all possible index pairs in the subsequence consisting of a datum labeled with 1 and another one with 0. Just like the expected classification error for uniform DLPs is 0.5 for all learning algorithms, the \textbf{expected AUC} for uniform DCWLP is 0.5 for all learning algorithms.

So far we only considered a fixed partition of the $\tsetsize$ data, such that the labels of only a certain subsequence of $\labelseq$ of length $\tsetsize-\esize$ are known and the remaining are to be predicted. That is, the expected classification error is conditional to this particular split of data. From now on, we drop this condition, and consider the expected classification error averaged over all possible splits of the data into $\tsetsize-\esize$ revealed and $\esize$ hidden.\footnote{The uniformity over the possible splits is natural, for example, if the contents of the input sequence $\indset$ are assumed to be independently drawn from some unknown distribution over their values. See e.g. \citet{wolpert2021important} for similar averaging.} This leads us to the following analogy between the considered DLPs and \textbf{error detecting codes}. A binary code that can detect up to $\esize$ erroneous bits from a binary sequence of length $\tsetsize$ is a set of \textbf{code words} (here binary sequences of length $\tsetsize$), such that the Hamming distance between every two distinct words is at least $\esize+1$. Moreover, a special case of binary codes, a code whose words are restricted to have exactly $\posc$ ones out of the $\tsetsize$ elements is called a  \textbf{constant-weight code} (CWC). The error detection procedure simply checks whether the binary sequence under consideration matches with any of the code words. If there is no match, an error is detected. Now consider a DLP from which one draws a label sequence of length $\tsetsize$ of which $\esize$ randomly chosen entries are hidden. If the support of DLP forms a binary error detecting code of length $\tsetsize$ and detecting capability of $\esize$ bits, there exists a learning algorithm with zero expected classification error for that DLP for all possible splits of the data into $\tsetsize-\esize$ known and $\esize$ unknown. Namely, the learning algorithm simply predicts the unknown labels according to the code word that the known labels match with. The converse is equally straightforward. If the set of label sequences with nonzero probability does not form an above kind of error detecting code, then there can not exist such a learning algorithm with zero expected classification error for that DLP for all train-test splits.

The above analogy between error detecting codes and learning algorithms with zero expected classification error manifests itself in a smaller scale on a limited data if a part of known labels are intentionally hidden for performance evaluation purposes. Indeed,
\textbf{cross-validation} (CV) is a family of learning algorithms' prediction performance evaluation techniques that simulate the above described setting. Subsequences of fixed length $\esize$ are held out for performance evaluation, while the remaining $\tsetsize-\esize$ data are used for training. Results of several (or even all possible) such train-test splits are averaged for obtaining a statistic of the prediction performance.\footnote{While CV is most often used to estimate the generalization performance outside the data used for CV, in this paper we rather focus on the above described question about whether the underlying DLP is uniform, and hence whether learning is possible in the first place. These are two very different statistical questions.}
In this paper, we focus especially on a certain type of CV useful for measuring AUC. We consider the \textbf{leave-pair-out cross-validation} (LPOCV), in which every differently labeled pair of data is hidden at a time (see e.g. \citet{airola2011experimental,smith2014correcting}).

Next, we link our study to statistical hypothesis testing, in a setup that resembles our consideration very closely and can be considered as its special case. We recall a classical tool known as the WMW test of statistical significance, Wilcoxon's rank sum test and the Mann-Whitney U-test. Given that the considered sequence of inputs $\indset$ is assumed fixed, we focus on the so-called randomization model interpretation of the test (see e.g. \citet{fay2010}). In the context of the test, we temporarily restrict our consideration on fixed sequences $\bm{f}=(\predfun(1),\ldots,\predfun(\tsetsize))\in\mathbb{R}^\tsetsize$ of predictions. In contrast to the above framework for learning algorithms, the prediction sequence under consideration is not learned from any part of the data, but it is rather given a priori. This can, in fact, be considered as a special case of a learning algorithm that ignores the known training labels and always outputs the predictions according to $\bm{f}$.

The so-called null hypothesis of the WMW test indicates that the predictions $\bm{f}$ are independent on the labels $\labelseq$. This independence is, in turn, equivalent with the assumption of uniformity of the underlying DCWLP. The strength of evidence against this null hypothesis is measured by what we refer to simply as the $\uval$-value. It is here defined as the sum over all possible evaluations of $\kernel_\predfun(\labelseq,i,j)$ on the data, where $i$ and $j$ go through the entries of $\labelseq$ labeled with 1 and 0, respectively. That is, $U$-value is the number of pairwise mistakes of $\bm{f}$ with respect to the labelling $\labelseq$. Then, AUC equals $1-\frac{U}{\posc(\tsetsize-\posc)}$ for data  of length $\tsetsize$ and weight $\posc$. Given a $U$-value and the values of $\tsetsize$ and $\posc$, one can determine the probability, referred to as p-value, of obtaining as small or even smaller $U$-value under the null hypothesis. The test rejects if the p-value is smaller than some pre-determined significance threshold $\alpha$. In other words, the threshold is an upper bound for the probability of falsely rejecting the null hypothesis when it holds. These probabilities with the WMW statistic are well-known and easy to calculate for small numbers of data. For example, if we have a sample of size 30 of which 15 are labeled with 1, and the labels of the data are drawn from uniform distribution over binary sequences of length 30 and weight 15, the probability mass function of the $\uval$-values is known to have the bell-shaped form depicted in Figure~\ref{figureofwilcoxondistribution}. Given a significance level, say $\alpha=0.05$, the number of data $\tsetsize$ and weight $\posc$, one can determine the maximal $U$-value for which the test still rejects. For the WMW test, these so-called \textbf{critical values} for different combinations of $\tsetsize$ and $\posc$ are often conveniently listed in statistical tables, as shown in Figure~\ref{wmwcriticals}.

\begin{figure}
	\centering
	\includegraphics[width=0.75\linewidth]{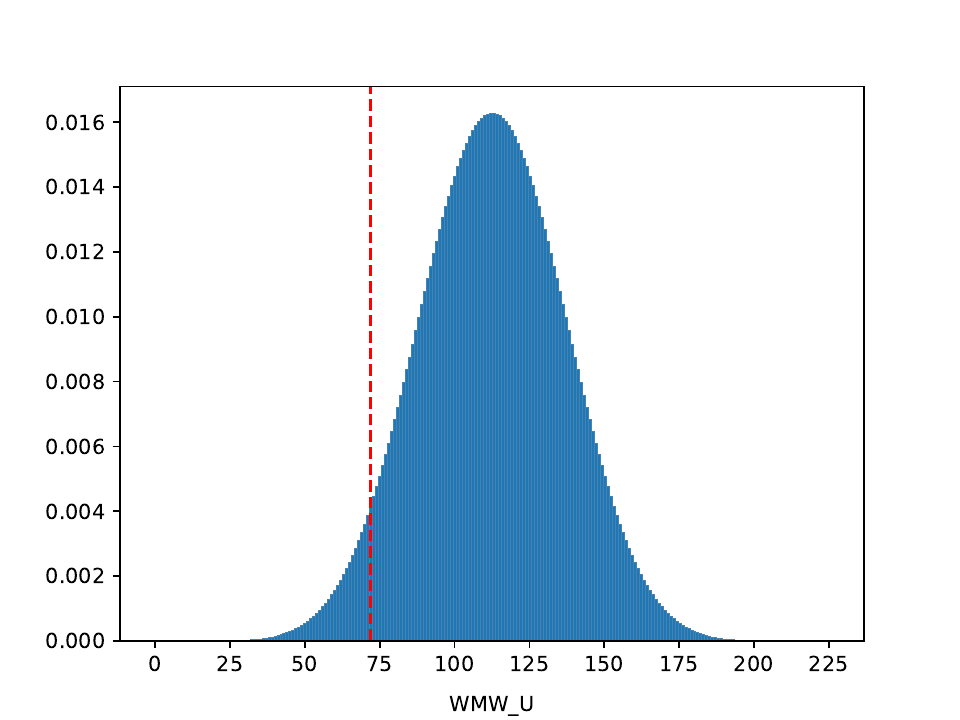}
	\caption{The distribution of U-values under the null hypothesis for a sample of size 30 of which 15 labeled with one and a fixed prediction function. This is also known as the Wilcoxon's distribution and it determines the critical values for the WMW-test. The red dashed line denotes the significance level 0.05, that is, 5\% of the probability mass is on the left side of the line.}
	\label{figureofwilcoxondistribution}
\end{figure}

\begin{figure}
	\centering
	\includegraphics[width=0.9\linewidth]{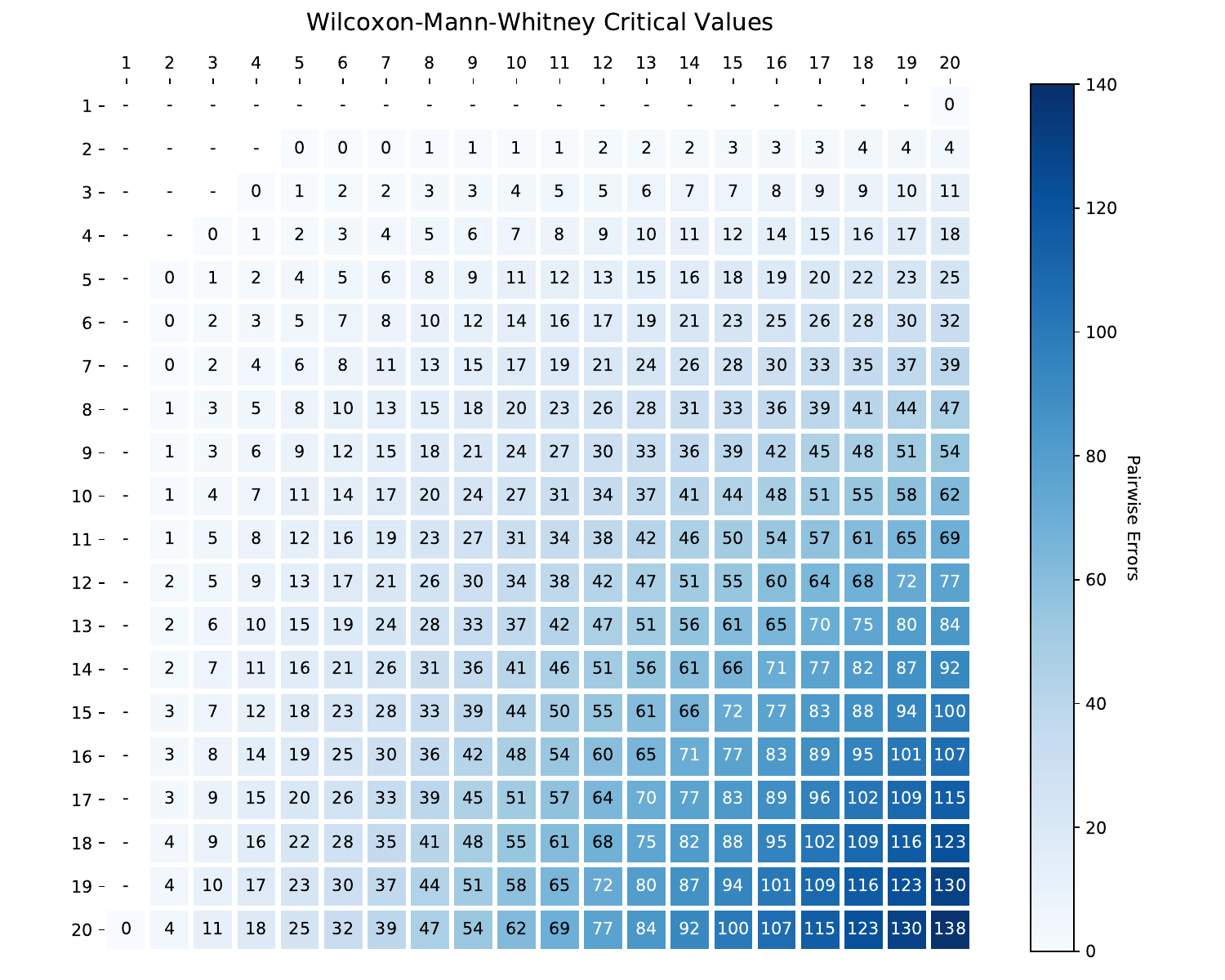}
	\caption{Critical values of the one-sided Wilcoxon-Mann-Whitney U test for a significance level 0.05, indicating for different numbers of data labeled with one and zero, what is the maximum number of pairwise errors with which the null hypothesis can still be rejected.}
	\label{wmwcriticals}
\end{figure}

Now we switch our focus back on learning algorithms and consider whether the number of LPOCV errors can be used as a tool for gathering evidence against the uniformity of DCWLP in the same sense as the $U$-value for the WMW test. For example, if we consider the above mentioned special case of a learning algorithm that always outputs the same predictions $\bm{f}$ independently of the training data, then the $U$-value between $\labelseq$ and $\bm{f}$ is equal to the number of LPOCV errors. However, if learning from the training data with revealed labels takes place, the LPOCV error distribution is not unique under the null hypothesis of uniform DCWLP. Namely, in addition to the number of data $\tsetsize$ and the weight $\posc$ of the label sequence, the shape of the LPOCV error distribution also depends on the combination of the learning algorithm and the data $\indset$. This is illustrated with the following example consisting of both a very simple learning algorithm and data:
\begin{example}\label{orderdirectionlearnerexample}
Assume that we have a sequence of 30 real-valued data $\indset\in\mathbb{R}^{30}$ of which 15 is labelled with 1 and the rest with zero.
Consider an algorithm that learns the following prediction function from the training data with revealed labels:
\[
\predfun(x)=
\left\{
\begin{array}{cc}
	x&\ureal>0.5\\
	0&\ureal=0.5\\
	-x&\ureal<0.5
\end{array}
\right.\;,
\]
where $\ureal$ is the AUC on the data with revealed labels with the values of $\indset$ used as predictions. Intuitively, the leaning algorithm simply determines whether the data labeled with 1 tend to have larger value of $x$ than those labeled with 0, or vice versa. The distribution of pairwise LPOCV errors obtained by uniformly sampling label permutations for this learning algorithm is depicted in Figure~\ref{figorderdirectionlearnerudist}.
\end{example}
\begin{figure}
	\centering
	\includegraphics[width=0.75\linewidth]{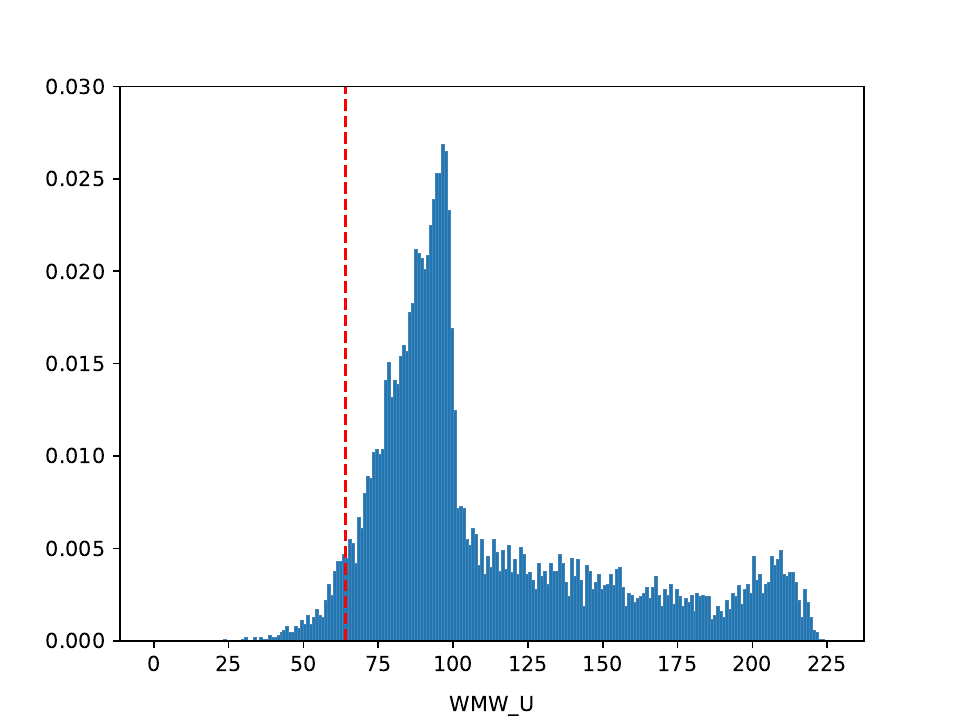}
	\caption{The distribution of U-values under the null hypothesis for a sample of size $\tsetsize=30$ and weight $\posc=15$ and with the order direction leaner. The red dashed line denotes the significance level 0.05, that is, 5\% of the probability mass is on the left side of the line.}
	\label{figorderdirectionlearnerudist}
\end{figure}
We observe that even the above presented learning algorithm that is able to learn only two alternative hypotheses is under uniform DCWLP associated with a LPOCV error distribution that is quite far from the bell-shaped Wilcoxon distribution one obtains from the WMW-test's $U$-values under null hypothesis. Moreover, in most cases, one does not have other means to determine the null $\uval$-distribution for each combination of data and a learning algorithm than brute force sampling. Nevertheless, through the above described link to error detecting codes, it is possible to determine upper bounds for the probability of obtaining at least as low LPOCV errors as the ones measured on the data under the null hypothesis of uniform DCWLP.

Recall from above that if the support of a DCWLP for certain $\tsetsize$ and $\posc$ forms a CWC, then there exists a learning algorithm with zero expected LPOCV error for that DCWLP. Accordingly, we show that the probability of obtaining a zero expected LPOCV error under the uniform DCWLP for any combination of a learning algorithm and data is upper bounded by the ratio of maximal size of CWC to the number of all constant weight words. Unfortunately, these maximal sizes of CWCs are only partly or approximately known. Indeed, according to \citet{sloane1989unsolved}, the two fundamental problems in coding theory are to find the maximal numbers of code words for binary or constant weight codes, with the constraints on word lengths, distances between code words and weights. Furthermore, these coding theory results do not as such provide any insight about the probabilities of different nonzero LPOCV errors under uniform DCWLPs. Therefore, to analyze the underlying combinatorics further, we introduce in this paper the concept of a \textbf{light constant weight code} (we follow the naming convention used in graph theory \citep{Asahiro2O15wlight}). We say that a set of constant weight words of length $\tsetsize$ and weight $\posc$ is \textbf{$\uu$-light}, if there exists a combination of a learning algorithm and data such that for any code word assigned as the labeling of the inputs, at most $\uu$ LPOCV errors take place. The maximal sizes of light constant weight codes form similar challenging combinatorial questions as those of the ordinary constant weight codes. As the last but not least contribution of this paper, we prove both upper and lower bounds for the maximal light constant weight code sizes. These bounds both provide new insights for the combinatorics of cross-validation and pave the way towards practical statistical tests on whether learning can take place regarding the problem under consideration.

This paper is organized as follows. In Section~\ref{prelimsection}, we formalize the considered topics and connect them to the relevant concepts in theoretical statistics. In Section~\ref{codingsection}, we carry out the combinatorial analysis of LPOCV with the machinery of coding and graph theory. The potential of the results for designing new statistical tests is experimentally simulated in Section~\ref{simusection} and the paper is concluded in Section~\ref{disc}.

\section{Preliminaries}\label{prelimsection}

In this section, we first provide the definitions of the statistical concepts used in the remaining of the paper in Section~\ref{basicdefsection}. We then continue in Section~\ref{independencesection} by defining what we call the null hypothesis of uniform DCWLP indicating that the class label assignments of data are drawn from a uniform distribution over constant weight words for a given length $\tsetsize$ and weight $\posc$.

\subsection{Notation and Basic Definitions}\label{basicdefsection}

Let $\cwwset(\tsetsize,\posc)=\{\binseq\in\{0,1\}^\tsetsize\mid\sum_{i=1}^\tsetsize\binseq_i=\posc\}$ for some $0<\posc<\tsetsize$. Since the elements of $\cwwset(\tsetsize,\posc)$ are binary vectors with exactly $\posc$ ones, we say that they are constant weight binary words of length $\tsetsize$ and weight $\posc$. For a word $\binseq$ and a permutation $\pi$ defined on the set $\{1,\ldots, \tsetsize\}$   we define the operation $\pi \cdot \binseq$ by 
\begin{equation}\label{perm}
   \pi \cdot \binseq=(\binseq_{\pi(1)},\binseq_{\pi(2)},\ldots, \binseq_{\pi(\tsetsize)}) 
\end{equation}
In particular, the transposition, denoted as $(i\;j)$, switches the $i$th and $j$th entry of the word and leaves  all other entries fixed.

Recall that  $\indset=(x_1,\ldots,x_\tsetsize)$ denotes an arbitrary sequence of inputs, that in this paper is considered fixed in advance. Therefore, we omit it from the forthcoming definitions and considerations below. From now on, $\labelvar$ denotes the categorical random variable corresponding to the binary constant weight learning problems, that is, its values are in $\cwwset(\tsetsize,\posc)$. By denoting $\labelvar\sim\textnormal{Uni}[\cwwset(\tsetsize,\posc)]$ or $\labelvar\sim\mathcal{D}(\cwwset(\tsetsize,\posc))$, we respectively indicate that the distribution of constant weight learning problems (DCWLP) associated to $\labelvar$ is either uniform or some other arbitrary  non-uniform one. For a particular realization of $\labelvar$, corresponding to the actual labels observed in an experiment, we use the symbol $\labelseq$. For alternative realizations, we also use the symbol $\binseq$.

Let $\lalg$ denote a learning algorithm. In this paper, we do not provide their explicit definitions, since they are not relevant in our considerations. Rather, we define them implicitly via their predictions of the hidden class labels, given a particular realization of the learning problem. In other words, the learning algorithms differ from each other only by their behavior with regard to the class label sequence $\labelseq\in\cwwset(\tsetsize,\posc)$ and which of its entries are revealed. In particular, as our focus is on leave-pair-out cross-validation, we only consider cases in which two entries of the label sequence are hidden, one entry labeled with 1 and other with 0.

Next, we formalize the procedure in which the whole $\labelseq$, except one datum labeled with 1 and another one labeled with 0, are used to predict which of the two points is which. Namely, we define the following concept that we refer to as leave-pair-out:
\begin{definition}[leave-pair-out (LPO)]\label{lpocvdef}
For any constant weight word $\binseq\in\cwwset(\tsetsize,\posc)$, and $1\leq i,j\leq\tsetsize$ such that $\binseq_i=1,\binseq_j=0$.
Let $\predfun_{\lalg(\binseq, i, j)}(i)$ and $\predfun_{\lalg(\binseq, i, j)}(j)$, respectively, denote the predictions made for the $i$th and $j$th data, when the training data provided for the learning algorithm $\lalg$ consists of the whole binary class label sequence $\binseq$, except that the $i$th and $j$th of its entries are hidden.
Then, for any deterministic and symmetric learning algorithm $\lalg$, we use the following short-hand notation:
\begin{align}\label{lpokernel}
\kernel_{\lalg}(\binseq,i,j)&\mapsto
\left\{
\begin{array}{ll}
0&\textnormal{If } \predfun_{\lalg(\binseq, i, j)}(i)>\predfun_{\lalg(\binseq, i, j)}(j)\\
0.5&\textnormal{If }\predfun_{\lalg(\binseq, i, j)}(i)=\predfun_{\lalg(\binseq, i, j)}(j) \\
1&\textnormal{If } \predfun_{\lalg(\binseq, i, j)}(i)<\predfun_{\lalg(\binseq, i, j)}(j)
\end{array}\right.
\end{align}
to indicate whether the prediction function inferred from the whole sample except a pair of indices with different class labels makes a mistake on the pair left out.
\end{definition}
Given this definition, the set of learning algorithms can be partitioned into equivalence classes according to their LPO behavior. Namely, we say that two learning algorithms $\lalg$ and $\lalg'$ are \textbf{$(\tsetsize,\posc)$-LPO equivalent} if
\begin{align}\label{lpoeqclass}
	\kernel_{\lalg}(\binseq,i,j)=\kernel_{\lalg'}(\binseq,i,j)
\end{align}
for all $\binseq\in\cwwset(\tsetsize,\posc)$ and $i,j$ such that $\binseq_i=1,\binseq_j=0$.

For any $\lalg$ and any $\binseq$ we have the following identity that directly follows from Definition~\ref{lpocvdef}:
\begin{align}\label{mainconstraint}
\kernel_{\lalg}(\binseq,i,j)=1-\kernel_{\lalg}((i\;j)\cdot\binseq,j,i)\;,
\end{align}
indicating that if the labels of the two differently labeled indices held out from the training set are transpositioned, the value of $\kernel$ is complemented.

As an important special case, where LPO is equivalent to the pairwise error in WMW statistic, we define what we call \textbf{constant learning algorithms}. For all $\binseq$,
\begin{align}\label{constantalgo}
\kernel_{\lalg_\predfun^{\textnormal{Constant}}}(\binseq,i,j)
=\left\{
\begin{array}{ll}
0&\textnormal{if }\predfun(i)>\predfun(j)\\
0.5&\textnormal{if }\predfun(i)=\predfun(j)\\
1&\textnormal{if }\predfun(i)<\predfun(j)
\end{array}\right.
\end{align}
for some real-valued prediction function $\predfun$ that depends neither on the binary sequence $\binseq$ nor the indices $i$ and $j$. That is, constant learning algorithms ignore the training set and always output the same prediction function $\predfun$ for all $i$, $j$ and $\binseq$. Clearly, (\ref{constantalgo}) is equal to the pairwise error (\ref{pairwiseerror}) mentioned in the context of the WMW statistic and test.

\subsection{Null Hypothesis of Uniform DCWLP}\label{independencesection}

In our forthcoming analysis, we adopt a \textbf{null hypothesis} that assumes the underlying distribution of the DCWLP to be uniform:
\begin{definition}[Null hypothesis]
\begin{align}\label{nullhypoeq}
\labelvar\sim\textnormal{Uni}[\cwwset(\tsetsize,\posc)]
\end{align}
\end{definition}
In this paper we focus on this setting, since it coincides exactly with the core assumption of the WMW-test (see e.g.  \citet{fay2010}).

We now consider how to measure the strength of evidence against the null hypothesis of uniform DCWLP.
For this purpose, we define what is known as leave-pair-out cross-validation (LPOCV) \citep{airola2011experimental}:
\begin{definition}[LPOCV]
For any $\lalg$ and any $\binseq\in\cwwset(\tsetsize,\posc)$:
\begin{align}\label{ustat}
\uval_{\lalg}(\binseq)
=
\sum_{i:\binseq_i=1}\sum_{j:\binseq_j=0}
\kernel_{\lalg}(\binseq,i,j)\;
\end{align}
\end{definition}
That is, the $\uval$-value counts how many LPO mistakes the learning algorithm $\lalg$ makes for the learning problem determined by $\binseq$.

Under the null hypothesis, the cumulative value distribution of $\uval_\lalg$ for a learning algorithm $\lalg$ can be expressed as:
\begin{align}\label{uexch}
\operatorname{P}_{\labelvar\sim\textnormal{Uni}(\cwwset(\tsetsize,\posc))}\left(\uval_\lalg(\labelvar)\leq \uu\right)&=\binom{\tsetsize}{\posc}^{-1}\sum_{\binseq\in\cwwset(\tsetsize,\posc)}\delta\left(\uval_\lalg(\binseq)\leq \uu\right)\;,
\end{align}
where $\uu$ denotes the argument of the cumulative distribution. The shape of the null distribution (\ref{uexch}) of LPOCV values depends on the learning algorithm $\lalg$. For example, the probability mass function corresponding to the learning algorithm presented in Example~\ref{orderdirectionlearnerexample} is depicted in Figure~\ref{figorderdirectionlearnerudist}, with the red dashed line determining the argument $\uu$.

As a special case, we consider the family of constant learning algorithms as in (\ref{constantalgo}) that output a fixed prediction functions $\predfun$ independently of the data. Then, (\ref{ustat}) becomes the $\uval$-value of the well-known Wilcoxon-Mann-Whitney test. In this case, the null distribution, sometimes referred in the literature as the Wilcoxon distribution, is also well-known. The exact cumulative probabilities for the Wilcoxon distribution can be obtained from the following recursion (see e.g. \citet{Cheung1997WMW} and references therein):
\begin{align}\label{wmwdistr}
\operatorname{P}_{\labelvar\sim\textnormal{Uni}(\cwwset(\tsetsize,\posc))}\left(\uval_\predfun(\labelvar)\leq \uu\right)
&=\binom{\tsetsize}{\posc}^{-1}Q(\uu, \tsetsize, \posc)\;,
\end{align}
where $\uu\in\mathbb{N}$ denotes the number of pairwise errors and $Q(\uu, \tsetsize, \posc)$
\[
=\left\{
\begin{array}{cc}
0     &  \textrm{ when }\uu<0\\
\binom{\tsetsize}{\posc}&\uu \geq \posc(\tsetsize-\posc)\\
\uu+1&\posc=1\textrm{ or }\tsetsize-\posc=1\\
Q(\uu, \tsetsize-1, \posc)+Q(\uu-\tsetsize+\posc, \tsetsize-1, \posc-1)     &\textrm{ otherwise } 
\end{array}
\right.\;.
\]
Knowing this cumulative distribution makes it possible to put the null hypothesis under a test by evaluating whether the observed U-value is too unlikely to be drawn from the distribution.

We next define some concepts related to constructing a test from the LPOCV-statistic. Following the terminology of \citet{casella2002statistical}, we say that a statistic  $p(\labelvar)$ is called a \textbf{valid $p$-value} if it, for any significance level $0\leq\alpha\leq 1$, fulfills
\begin{align}\label{pvaldef}
\operatorname{P}_{\labelvar\sim\textnormal{Uni}(\cwwset(\tsetsize,\posc))}(p(\labelvar)\leq\alpha)\leq\alpha
\end{align}
under the null hypothesis. That is, $p$-value is an upper bound on the test's \textbf{type I error}, the probability that its value is at least as small as $\alpha$ under the null hypothesis. Moreover, we say that $\critval(\alpha,\tsetsize,\posc)$ is a \textbf{critical value} for a significance level $\alpha$ if
\begin{align}\label{critvaldef}
\operatorname{P}_{\labelvar\sim\textnormal{Uni}(\cwwset(\tsetsize,\posc))}\big(\uval_\lalg(\labelvar)\leq \critval(\alpha,\tsetsize,\posc)\big)<\alpha
\end{align}
under the null hypothesis. Furthermore, given a significance level $\alpha$, learning algorithm $\lalg$ and some arbitrary categorical distribution $\mathcal{D}(\cwwset(\tsetsize,\posc))\neq\textnormal{Uni}(\cwwset(\tsetsize,\posc))$, the \textbf{power} function for the test is defined as:
\[
\operatorname{P}_{\labelvar\sim\mathcal{D}(\cwwset(\tsetsize,\posc))}(p(\labelvar)<\alpha)
\]
under the alternative hypothesis determined by $\mathcal{D}(\cwwset(\tsetsize,\posc))$. The power refers to the probability of rejecting an incorrect null hypothesis using the significance level $\alpha$. The probability of the so-called \textbf{type II error} under $\mathcal{D}(\cwwset(\tsetsize,\posc))$ indicating the failure to reject a wrong null hypothesis is simply one minus the power of the test. Note, however, that the power is always against a particular alternative distribution $\mathcal{D}$, that is usually unknown.

As an example, we again consider the WMW test. If one selects a significance level, say $\alpha=0.05$, the quantity (\ref{wmwdistr}) can be used as a p-value for a number of pairwise errors $\uu$ taking place in WMW U-statistic calculated for a sample. The largest numbers of errors $\uu$ for which one can reject the null for different values of $\posc$ and $\tsetsize-\posc$ are often presented in tables of critical values for certain standard significance levels. For the WMW-test, the values for $\alpha=0.05$ are shown in Figure~\ref{wmwcriticals}.

In contrast to the WMW test, the case for arbitrary learning algorithms is more complicated, as different learning algorithms may have differently shaped U-distributions (\ref{uexch}) under the null. In statistical literature, it is said that there is some unknown nuisance parameter that determines the shape of the distribution (\ref{uexch}). In our case, it can be interpreted so that the learning algorithm $\lalg$ (in particular its LPOCV behavior on the fixed sample $\indset$) is the nuisance parameter.

One of the simplest and most popular ways to tackle the nuisance parameter when designing new tests is the worst case analysis known as the ``supremum'' approach, which can be found from many textbooks on statistics. We rephrase the Theorem~8.3.27 by \citet{casella2002statistical} with our terminology and symbols:
\begin{theorem}\label{suptheorem}
Let $\uval_\lalg(\labelseq)$ as in (\ref{uexch}) be a test statistic such that its small values provide evidence against the null hypothesis. For each $\labelseq$, let
\[
p(\labelseq)=\sup_{\lalg}\operatorname{P}_{\binvar\sim\textnormal{Uni}(\cwwset(\tsetsize,\posc))}(\uval_{\lalg}(\binvar)\leq\uval_\lalg(\labelseq))\;.
\]
Then, $p(\labelvar)$ is a valid $p$-value.
\end{theorem}
While in the original form by \citet{casella2002statistical} the supremum is taken over all possible values of the unknown nuisance parameter determining the shape of the null distribution, here the supremum is simply taken over all possible learning algorithms. In the next section, we show how to compute the supremum over all possible deterministic and symmetric learning algorithms based on the theory of the classical error detecting codes.

\section{Combinatorial Analysis of the Null Distribution}\label{codingsection}

In Section~\ref{cweccsection}, we define light constant weight codes, an extension of the ordinary constant weight error correcting codes, and present the connection between the combinatorics of LPOCV and the considered codes. In Section~\ref{boudsection}, we prove an upper bound on the maximal size of light constant weight codes. In Section~\ref{bosesubsection}, we establish a lower bound, based  on an extension of a classical algebraic approach. 
\subsection{Constant Weight Error-Correcting Codes and the Johnson Graph}\label{cweccsection}

\begin{figure}
	\begin{adjustbox}{max width=\textwidth}
		\includegraphics[align=c]{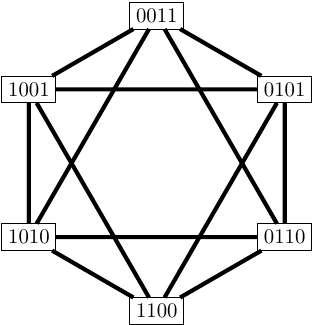}\hspace*{9pt}\includegraphics[align=c]{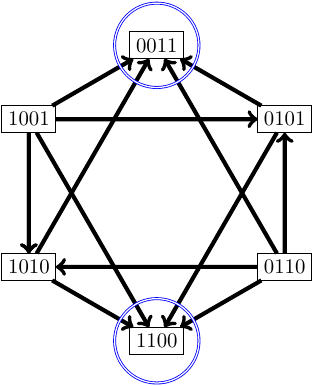}\hspace*{9pt}\includegraphics[align=c]{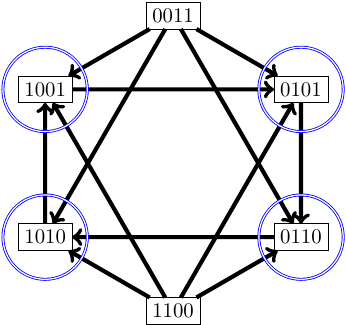}
	\end{adjustbox}
	\caption{Left: Illustration of an undirected Johnson graph $\johnson(4,2)$, whose vertices are labeled with the constant weight words of length $4$ and weight $2$, and two vertices are connected with an edge if the Hamming distance between the vertex labels is smaller than 4. Middle: The labels of a subset of the Johnson graph vertices (circled) corresponds to a constant weight code, since the vertices are disconnected from each other. Right: The labels of a subset of the Johnson graph vertices (circled) corresponds to a $1$-light constant weight code, since there exist an orientation (illustrated with arcs) under which the outdegrees of the vertices are at most $1$.}
	\label{graphexamples}
\end{figure}

Let us consider sub-sets of the constant weight words $\vecset\subset\cwwset(\tsetsize,\posc)$. If the Hamming distance between distinct vectors in $\vecset$ is at least $\hdist$, we say that $\vecset$ is a constant weight error correcting code of length $\tsetsize$, weight $\posc$ and minimum distance $\hdist$. With $\cwc(\tsetsize,\hdist,\posc)$, we denote the maximum number of binary vectors a constant weight code of length $\tsetsize$, Hamming distance $\hdist$ and weight $\posc$ can consist of. In this paper, we only consider minimum distance $\hdist=4$, and we call these shortly constant weight codes. These constant weight codes can also be considered via the Johnson graphs defined as follows:
\begin{definition}[Johnson graph]
A \textit{Johnson graph} $\johnson(\tsetsize,\posc)$ is an undirected graph, whose vertex set is $\cwwset(\tsetsize,\posc)$ and two vertices $\binseq$ and $\binseq'$ are connected with an edge if $(i\;j)\cdot\binseq=\binseq'$ for some $i$ and $j$ such that $\binseq_i=1$ and $\binseq_j=0$, that is, the Hamming distance between them is exactly $2$.
\end{definition}
An example of a Johnson graph $\johnson(4,2)$ is given in Figure~\ref{graphexamples}. The constant weight codes of minimum distance 4 correspond to the subsets of the Johnson graph vertices such that are all disconnected, and finding the value of $\cwc(\tsetsize,4,\posc)$ is equal to solving the maximum independent set problem on the Johnson graph $\johnson(\tsetsize,\posc)$. Figure~\ref{graphexamples} also contains an example of a constant weight code depicted as a subset of the Johnson graph $\johnson(4,2)$.

By an orientation $\Lambda$ of an undirected graph, we refer to an assignment of a direction to each one of its edges. An oriented graph can be conveniently represented with its adjacency matrix whose rows and columns are both indexed with the vertices and the element at position $(i,j)$ is one when there is an arc from vertex i to vertex j, and zero otherwise. The number of arcs starting from the vertex $\binseq$, that is, the number of positive entries on the corresponding row of the adjacency matrix, is the outdegree of the vertex under the orientation $\Lambda$, which we denote as $\degree^+_\Lambda(\binseq)$.

Adopting the term introduced by \citet{Asahiro2O15wlight}, we say that a vertex $\binseq$ of the Johnson graph $\johnson(\tsetsize,\posc)$ is $\uu$-light under orientation $\Lambda$ if $\degree^+_\Lambda(\binseq)\leq\uu$. Moreover, let
$\ucount(\uu,\tsetsize,\posc,\Lambda)$ denote the number of $\uu$-light vertices in the Johnson graph $\johnson(\tsetsize,\posc)$ under its orientation $\Lambda$.

Following this naming convention, we extend the concept of constant weight codes to $\uu$-light codes:
\begin{definition}[$\uu$-light codes]
We call a set $\vecset\subset\cwwset(\tsetsize,\posc)$ a $\uu$-light  $(\tsetsize,\posc)$ constant weight code, shortly a $\uu$-light code, if there exists such an orientation of the Johnson graph $\johnson(\tsetsize,\posc)$ under which all of the vertices corresponding to the elements of $\vecset$ are $\uu$-light.
\end{definition}

Finding a specific orientation fulfilling certain constraints for a given graph is called a graph orientation problem. \citet{Asahiro2O15wlight} introduced a graph orientation problem called ``maximize $\uu$-light'', that refers to finding, for a given graph, the maximum possible number of $\uu$-light vertices. We are in this paper mostly focusing on the orientations of the Johnson graphs and in particular we define the following concept:
\begin{definition}[Maximal size $\uu$ light code]
For binary vectors of length $\tsetsize$ and weight $\posc$, we denote the largest possible size of $\uu$ light code as
\[
\maxulight(\uu,\tsetsize,\posc)=\max_\Lambda\ucount(\uu,\tsetsize,\posc,\Lambda)\;,
\]
where the maximum is taken over the orientations of the Johnson graph $\johnson(\tsetsize,\posc)$.
\end{definition}
Accordingly, finding the largest possible constant weight code with minimum distance $4$ is equal to solving the maximize $0$-light problem on the corresponding Johnson graph. This is a well-known NP-hard problem in coding theory, but for which there exists good upper bounds (see e.g. \citep{ostergard2010cwc}). Similarly, finding the largest possible $\uu$-light constant weight codes is equal to solving the maximize $\uu$-light graph orientation problems on the Johnson graphs. We get back to these below in Section~\ref{boudsection}. An example of a $1$-light code depicted as a subset of the Johnson graph $\johnson(4,2)$ is shown Figure~\ref{graphexamples}.

Now we have the machinery for characterizing the possible outcomes of LPOCV.
\begin{proposition}\label{thelink}
There exists a bijection between the set of orientations of a Johnson graph $\johnson(\tsetsize,\posc)$ and the set of $(\tsetsize,\posc)$-LPO equivalence classes of learning algorithms.
\end{proposition}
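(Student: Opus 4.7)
The plan is to construct an explicit bijection $\Phi$ from $(\indset, \tsetsize, \posc)$-LPO equivalence classes of inducers to orientations of $\johnson(\tsetsize, \posc)$. The key observation is that each edge $\{\binseq, \binseq'\}$ of $\johnson(\tsetsize, \posc)$, where $\binseq' = (i\;j) \cdot \binseq$ with $\binseq_i = 1$ and $\binseq_j = 0$, encodes a single leave-pair-out scenario: since $\binseq$ and $\binseq'$ agree on every position outside $\{i, j\}$, the reduced training data $Z(\indset, \binseq, i, j)$ and $Z(\indset, \binseq', j, i)$ coincide. Applying (\ref{mainconstraint}) to a fixed inducer $\inducer$ yields $\kernel_\inducer(\indset, \binseq, i, j) = 1 - \kernel_\inducer(\indset, \binseq', j, i)$, so the two LPO values at the endpoints of any edge are always complementary. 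This lets the inducer canonically orient each edge, namely from $\binseq$ to $\binseq'$ iff $\kernel_\inducer(\indset, \binseq, i, j) = 1$; set $\Phi([\inducer])$ to be this orientation $\Lambda_\inducer$.

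Well-definedness on equivalence classes is immediate from (\ref{lpoeqclass}). For injectivity, if $\Lambda_{\inducer_1} = \Lambda_{\inducer_2}$ then the two inducers agree on $\kernel$ for every triple $(\binseq, i, j)$ with $\binseq_i = 1$ and $\binseq_j = 0$, which is exactly the defining condition of $(\indset, \tsetsize, \posc)$-LPO equivalence.

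The substantive step is surjectivity. Given an orientation $\Lambda$, I would construct a representative inducer $\inducer$ by independently prescribing $\pairpredfun^\inducer$ on each LPO training input: for each edge $\{\binseq, (i\;j) \cdot \binseq\}$ oriented from $\binseq$ to $(i\;j) \cdot \binseq$, set $\pairpredfun^\inducer_{Z(\indset, \binseq, i, j)}(\indset_i, \indset_j) = 1$ and its value on the reversed pair to $0$; define $\inducer$ arbitrarily on all other inputs, respecting the symmetry in Definition \ref{learningalgorithm}. The main obstacle is verifying that these prescriptions are mutually consistent, i.e.\ that distinct edges produce distinct labeled training inputs $(Z_\indset, Z_\binseq)$. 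This holds because edges with different hold-out sets $\{i, j\}$ retain different positions of $\indset$ and so yield different reduced index sequences $Z_\indset$, while two edges sharing the same $\{i, j\}$ whose reductions agreed on $Z_\binseq$ would have endpoints differing only inside $\{i, j\}$, forcing the two edges to coincide because each constant-weight endpoint is recovered from $Z_\binseq$ by filling in one $1$ and one $0$ in only two ways. With this consistency established the construction succeeds and $\Phi([\inducer]) = \Lambda$, completing the bijection.
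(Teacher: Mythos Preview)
Your proof is correct and follows essentially the same approach as the paper's own argument: both define the map $\Phi$ by orienting the edge $\{\binseq,(i\;j)\cdot\binseq\}$ according to the value of $\kernel_\inducer(\indset,\binseq,i,j)$, invoke (\ref{mainconstraint}) for the complementarity at the two endpoints, and check injectivity by observing that a differing kernel value flips the corresponding arc. Your treatment of surjectivity is more careful than the paper's---you explicitly construct a representative inducer for a given orientation and verify that the prescriptions on $\pairpredfun^\inducer$ are consistent (distinct hold-out sets $\{i,j\}$ yield distinct reduced index sequences since $\indset$ has no repeated entries, and a shared hold-out set with coinciding $Z_\binseq$ forces the edges to coincide)---whereas the paper simply asserts that the arc directions ``determine the kernel values'' and leaves the existence of a realizing inducer implicit.
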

\begin{proof}
The proof is straightforward as it only requires connecting the two points of view with each other. Let us consider an algorithm $\lalg$, a representative of a single $(\tsetsize,\posc)$-LPO equivalence class. The constraint (\ref{mainconstraint}) determines the direction of an arc between the vertices $\binseq$ and $(i\;j)\cdot\binseq$ in the Johnson graph $\johnson(\tsetsize,\posc)$, so that the arc points from $\binseq$ to $(i\;j)\cdot\binseq$ if $\lalg$ makes and erroneous LPO prediction for the pair $(i,j)$ with the labeling $\binseq$ and a correct one with $(i\;j)\cdot\binseq$. One LPO equivalence class by its definition determines all values of $\kernel_{\lalg}(\binseq,i,j)$, and since the Johnson graph's edges are between vertices differing only by a single transposition, the class is mapped to a Johnson graph orientation. If two equivalence classes differ for some $\binseq\in\cwwset(\tsetsize,\posc)$ and index pair $(i,j)$, then also the orientation of the corresponding arc differs, and hence the mapping from the classes to the orientations is ``one-to-one''. Moreover, the arc directions of an oriented Johnson graph determine the values of $\kernel_{\lalg}(\binseq,i,j)$ for all $\binseq\in\cwwset(\tsetsize,\posc)$ and index pairs $(i,j)$ according to the constraint  (\ref{mainconstraint}), and hence the mapping is also ``onto''.
\end{proof}

The above result indicates that we can use the properties of the Johnson graphs, particularly their orientations, to characterize the LPOCV behavior of learning algorithms. We denote by $ \Lambda_{\lalg}$ the orientation associated with learning algorithm $\lalg$. Then, $\ucount(\uu,\tsetsize,\posc,\Lambda_{\lalg})$ is the number of weight $\posc$ learning problems for which the number of LPOCV errors made by $\lalg$ is at most $\uu$. Moreover, we can adopt the other above introduced coding theory concepts for the analysis as illustrated by the following corollary:
\begin{corollary}\label{maxcapacitycorollary}
The maximum number of different label sequences of length $\tsetsize$ and weight $\posc$ for which any fixed leaning algorithm makes at most $\uu$ LPO errors in LPOCV is
\[
\max_{\lalg}\left\arrowvert\left\{\binseq\in\cwwset(\tsetsize,\posc)\mid \uval_\lalg(\binseq)\leq 
\uu\right\}\right\arrowvert=\maxulight(\uu,\tsetsize,\posc)\;.
\]
\end{corollary}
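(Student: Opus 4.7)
The plan is to translate the left-hand inequality on $\uval_\inducer$ directly into the outdegree condition that defines $\uu$-lightness, and then invoke Proposition~\ref{thelink} to replace the maximum over inducers by a maximum over orientations of $\johnson(\tsetsize,\posc)$.

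First I would fix $\indset$ and an inducer $\inducer$, and consider the associated Johnson graph orientation $\Lambda_{\inducer,\indset}$. From the construction used in the proof of Proposition~\ref{thelink}, an arc leaves $\binseq$ toward the neighbor $(i\;j)\cdot\binseq$ exactly when $\kernel_\inducer(\indset,\binseq,i,j)=1$, that is, when $\inducer$ errs on the pair $(i,j)$ under the labeling $\binseq$. Since the neighbors of $\binseq$ in $\johnson(\tsetsize,\posc)$ are in bijection with the $\posc(\tsetsize-\posc)$ pairs $(i,j)$ with $\binseq_i=1,\binseq_j=0$, summing the kernel over these pairs recovers on one side the unnormalized LPOCV error count and on the other side the outdegree, giving the identity
\[
\degree^+_{\Lambda_{\inducer,\indset}}(\binseq)=\posc(\tsetsize-\posc)\cdot\uval_\inducer(\indset,\binseq),
\]
which I would verify by directly unwinding the definitions of $\kernel_\inducer$ and $\uval_\inducer$.

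Next, this identity makes the inequality $\uval_\inducer(\indset,\binseq)\leq \tfrac{\uu}{\posc(\tsetsize-\posc)}$ equivalent to $\degree^+_{\Lambda_{\inducer,\indset}}(\binseq)\leq \uu$, which is precisely the definition of $\binseq$ being $\uu$-light under $\Lambda_{\inducer,\indset}$. Therefore the cardinality of the set on the left-hand side equals $\ucount(\uu,\tsetsize,\posc,\Lambda_{\inducer,\indset})$. Finally, Proposition~\ref{thelink} says that $\inducer\mapsto\Lambda_{\inducer,\indset}$ is a bijection between $(\indset,\tsetsize,\posc)$-LPO equivalence classes and orientations of $\johnson(\tsetsize,\posc)$, so as $\inducer$ ranges over all inducers the associated orientation ranges over every orientation of the Johnson graph. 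Taking the maximum therefore yields
\[
\max_\inducer \ucount(\uu,\tsetsize,\posc,\Lambda_{\inducer,\indset}) = \max_\Lambda \ucount(\uu,\tsetsize,\posc,\Lambda) = \maxulight(\uu,\tsetsize,\posc),
\]
as claimed.

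There is no substantive obstacle once Proposition~\ref{thelink} is available; the argument is essentially a definitional unfolding wrapped around that bijection. The only point requiring care is keeping the orientation convention consistent, namely confirming that it is the \emph{outdegree} (and not the indegree) at $\binseq$ that counts the LPOCV errors made on labeling $\binseq$, so that the inequality on $\uval_\inducer$ aligns with the $\uu$-light condition without an off-by-one or complementation slip. Once this is checked, the normalization factor $\posc(\tsetsize-\posc)$ on the right-hand side matches exactly the number of ordered mismatched pairs summed over in $\uval_\inducer$, and the equality is immediate.
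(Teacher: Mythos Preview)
Your proposal is correct and follows essentially the same approach as the paper: the paper simply states that the corollary follows from Proposition~\ref{thelink}, having noted just beforehand that $\ucount(\uu,\tsetsize,\posc,\Lambda_{\inducer,\indset})$ is precisely the number of weight-$\posc$ labelings with at most $\uu$ LPO errors, and your argument is a careful unpacking of exactly that observation together with the bijection.
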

Finally, to quantify the supremum over the learning algorithms in Theorem~\ref{suptheorem}, the following corollary summarizes that it coincides with the maximal size of a $\uu$-light $(\tsetsize,\posc)$-code.
\begin{corollary}
Fix $\tsetsize$ and $\posc$. Then, for any $\labelseq$,
\[
\sup_{\lalg}\operatorname{P}_{\binvar\sim\textnormal{Uni}(\cwwset(\tsetsize,\posc))}(\uval_{\lalg}(\binvar)\leq\uval_\lalg(\labelseq))=\binom{\tsetsize}{\posc}^{-1}\maxulight\left(\uval_\lalg(\labelseq),\tsetsize,\posc\right)\;.
\]
\end{corollary}
Next, we quantify the values of $\maxulight(\uu,\tsetsize,\posc)$ by generalizing some classical results on constant weight error correcting codes to $\uu$-light codes.

\subsection{Bounds for the Maximal LPO Capacity}\label{boudsection}

\begin{figure}
	\centering
	\includegraphics[width=0.75\linewidth]{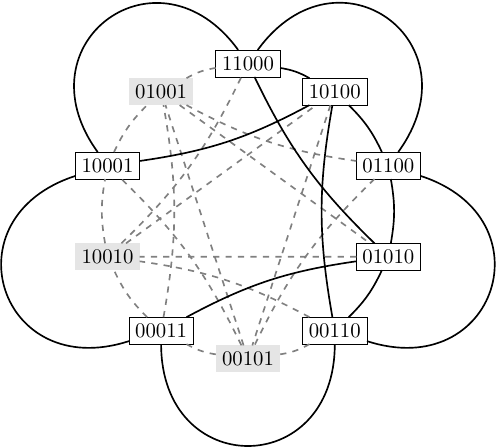}
	\caption{An illustration of a Johnson graph $\johnson(5,2)$ and its sub-graph consisting only of vertices with maximum degree 4.}
	\label{2lightsubgraph}
\end{figure}

We start by presenting some values of $\maxulight(\uu,\tsetsize,\posc)$, that are easy to calculate. We will use the following facts and terminology:

For any undirected finite simple graph $G$, for each vertex $x$ let us call \textbf{$e$-degree} the degree of $x$ if it is even, and the degree of $x$ plus $1$ if the degree is odd. Denote this e-degree by $e(x,G)$.

\begin{fct}\label{zerofact} Every undirected finite simple graph $G$ has an orientation in which the outdegree of each vertex $x$ is at most $e(x,G)/2$. 
\end{fct}

An orientation of $G$ is called a \textbf{balanced orientation}, in which orientation the outdegree of each vertex $x$ is at most $e(x,G)/2$.

\begin{proof}By induction on the the number of edges of $G$. If $G$ is a forest, then orienting the edges of each tree component towards a root vertex of the tree we have the desired orientation.
	If $G$ is not a forest, let $H$ be the graph obtained from $G$ by removing the edges of some circuit. For each vertex $x$ of the circuit $e(x,H) = e(x,G) -2$. Orient the edges on the circuit according to one of the two circular traversals of the circuit, and take any balanced orientation of $H$, which exists by the induction hypothesis. This defines a balanced orientation of $G$.
\end{proof} 
The following fact is a straightforward consequence of Fact \ref{zerofact}:

\begin{fct}\label{fact}
	If $G$ is an undirected graph with maximum degree $\Delta(G)\leq 2\uu$, then there exists an orientation of $G$ in which every vertex of $G$ is $\uu$-light. 
\end{fct} 
In the literature, these are known as Eulerian orientations if every vertex in $G$ has an even degree.

\begin{fct}\label{f2} A $\uu$-light set $S$ of vertices of the complete graph $K$ cannot have more than $2\uu+1$ elements. 
\end{fct}

\begin{proof}Let $S$ have $m$ elements, and fix an orientation of the complete graph in which every element of $S$ has outdegree at most $\uu$. Then the $m(m-1)/2$ edges with both end vertices in $S$ 
are over-counted (or counted exactly) by adding up the outdegrees of the members of $S$, which cannot exceed $m\uu$ due to $\uu$-lightness. Thus $m\uu$ is at least $m(m-1)/2$, i.e. $m$ is at most $2\uu+1$. 
\end{proof}

An example of a subgraph of a Johnson graph $\johnson(5,2)$ whose vertices have  maximum degree $4$, is shown in Figure~\ref{2lightsubgraph}. According to Fact~\ref{fact}, there exists a $2$-light orientation of the subgraph.

We observe that, for $\posc=1$ or $\tsetsize-\posc=1$, the Johnson graph is a tournament graph for which the values of $\maxulight$ are already known (see e.g. \citet{Mousley2000thesis}). We give here a direct proof of the following known result.
\begin{proposition}
	\label{boundary0}
	For $\posc=1$ or $\tsetsize-\posc=1$, we have
	\[
	\maxulight(\uu,\tsetsize,1)=\maxulight(\uu,\tsetsize,\tsetsize-1)=\min\left(2\uu + 1,\tsetsize
	\right)\;
	\]
\end{proposition}

\begin{proof}The Johnson graph in this case is a complete graph $K_n$ on $n$ vertices, regular of degree $n-1$. Fix any value for $\uu$.
	Denote $L(\uu,n,1) = L(\uu,n,n-1)$ simply by $L(\uu)$.
	By Fact \ref{f2}, $L(\uu)$ is at most $2\uu+1$. Trivially also $L(\uu)$ is at most $n$.
	\\
	Case 1:  If $n-1$ is at most $2\uu$, which is exactly the hypothesis of Fact \ref{fact}, then by Fact \ref{fact} the complete graph $K_n$ has an orientation (tournament) in which all outdegrees are at most $\uu$, thus $L(\uu)$ is $n = min (2\uu+1, n)$.
	\\
	Case 2: If $n-1$ is at least $2\uu$, i.e. $n$ is at least $2\uu+1$, enumerate some subset $S$ of $2\uu+1$ vertices of $K_n$ as $x(1), x(2),\ldots,x(2\uu), x(2\uu+1)$, the indices being integers modulo $2\uu+1$.
	For any edge $r$ of $K_n$ with end vertices $y\notin S$ and $z\in S$, orient $r$ from $y$ to $z$. If both $y$ and $z$ are outside $S$, orient $r$ arbitrarily.
	If $y=x(i)$ and $z=x(j)$, then orient $r$ from $y$ to $z$ if $j-i$ is congruent to one of the first $W$ integers $1,\ldots,W$.
	In this orientation, all vertices of $S$ are $W$-light, thus $L(\uu)= 2\uu+1 = \min (2\uu+1, n)$.
\end{proof}

By the \textbf{average degree} of a graph with $n$ vertices we mean $(d_1+\cdots +d_n)/n$, where $d_1,\ldots,d_n$ is the degree sequence.

\begin{lemma}\label{avgdegreelemma}
If the line graph of a graph G has an orientation in which all outdegrees are at most a number $\uu$, then the average degree of $G$ is at most $\uu+1$.
	
\end{lemma}

\begin{proof} Let $n$ and $m$ be the number of vertices and edges of $G$, respectively. For the degree sequence we have  $d_1 +\cdots + d_n  = 2m$. 
	Let $D_1,\ldots ,D_m$ be the sequence of outdegrees in an orientation of the line graph of $G$, each $D_i$ being at most $\uu$.
	The sum of the $D_i$'s is obviously equal to the number of edges of the line graph $H$ of $G$.
	Each edge of $H$ consists of two distinct edges of $G$ incident with a unique common vertex of $G$.
	Thus the edges of $H$ can be classified into $n$ classes, and these classes have cardinalities $\binom{d_1}{2},\ldots ,\binom{d_n}{2}$. 
	The sum  $\binom{d_1}{2}+\cdots +\binom{d_n}{2}$ is thus equal to the sum $D_1+\cdots +D_m$, which is at most $m\uu$.
	Therefore   $\binom{d_1}{2}+\cdots +\binom{d_n}{2}\leq m\uu=  (d_1 +\cdots + d_n)\uu / 2$.
	This can be easily rewritten as 
	\begin{equation}\label{eqA}
		d_1^2+\cdots +d_n^2\leq (d_1+\cdots +d_n)(\uu+1)
	\end{equation}
	Now apply the Cauchy-Schwarz inequality to the two $n$-vectors $(d_1,\ldots,d_n)$  and  $(1,\ldots,1)$. We get  $(d_1+\cdots+d_n)^2\leq (d_1^2+\cdots +d_n^2) n.$
	Combining with  inequality \ref{eqA} we have
	$$(d_1+\cdots+d_n)^2\leq (d_1+\cdots +d_n) n(\uu+1).$$
	Dividing by $(d_1+\cdots +d_n) n$ we get the intended result.
\end{proof}

For the cases $\posc=2$ or $\tsetsize-\posc=2$, we have the following result:
\begin{proposition}
	\label{boundary2}
	\[
	\maxulight(\uu,\tsetsize,2)=\maxulight(\uu,\tsetsize,\tsetsize-2)=
	\min\left(\left\lfloor\frac{(\uu+1)\tsetsize}{2}\right\rfloor,\binom{\tsetsize}{2}\right)\;
	\]
\end{proposition}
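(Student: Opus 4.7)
The plan rests on the well-known identification of the Johnson graph $\johnson(\tsetsize,2)$ with the line graph of the complete graph $K_\tsetsize$: each vertex $\binseq\in\cwwset(\tsetsize,2)$ corresponds to the edge of $K_\tsetsize$ whose endpoints are the two $1$-coordinates of $\binseq$, and two such vertices are adjacent in $\johnson(\tsetsize,2)$ precisely when the corresponding $K_\tsetsize$-edges share an endpoint. The equality $\maxulight(\uu,\tsetsize,2)=\maxulight(\uu,\tsetsize,\tsetsize-2)$ is immediate, because bitwise complementation of constant weight words is a graph isomorphism $\johnson(\tsetsize,\posc)\cong\johnson(\tsetsize,\tsetsize-\posc)$ that transports orientations to orientations, so I would treat only the case $\posc=2$.

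For the upper bound I would argue by double counting. Suppose $\vecset\subseteq\cwwset(\tsetsize,2)$ is $\uu$-light under an orientation $\Lambda$, and set $R=|\vecset|$. Every edge of $\johnson(\tsetsize,2)$ with both endpoints in $\vecset$ contributes $1$ to the outdegree of its tail inside $\vecset$, so the number $e(\vecset)$ of such edges satisfies $e(\vecset)\le\sum_{\binseq\in\vecset}\degree^+_\Lambda(\binseq)\le\uu R$. Viewing the elements of $\vecset$ as $R$ edges of $K_\tsetsize$ and letting $\degree_i$ denote the number of these edges incident to vertex $i\in[\tsetsize]$, the line-graph identification gives $e(\vecset)=\sum_{i=1}^{\tsetsize}\binom{\degree_i}{2}$. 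Since $\sum_i\degree_i=2R$, Cauchy--Schwarz forces $\sum_i\degree_i^2\ge 4R^2/\tsetsize$, whence $e(\vecset)\ge R(2R-\tsetsize)/\tsetsize$. Chaining the two bounds gives $R\le\tsetsize(\uu+1)/2$ whenever $R\ge\tsetsize/2$, the opposite case being trivial, and taking floors together with the evident $R\le\binom{\tsetsize}{2}$ yields the claimed upper bound.

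For the matching lower bound I would construct an explicit witness. Select an edge set $E'\subseteq E(K_\tsetsize)$ with $|E'|=\min(\lfloor\tsetsize(\uu+1)/2\rfloor,\binom{\tsetsize}{2})$ and maximum vertex degree at most $\uu+1$: when $\uu+1\ge\tsetsize-1$ take $E'=E(K_\tsetsize)$, otherwise a near-$(\uu+1)$-regular graph (e.g.\ a Harary or circulant construction) does the job. Identifying $E'$ with the corresponding vertex set $\vecset\subseteq\cwwset(\tsetsize,2)$, the max-degree bound propagates to every sub-edge-set $E''\subseteq E'$, and the elementary inequality $\binom{d}{2}\le\uu d/2$ valid for $d\le\uu+1$ then yields $\sum_i\binom{\degree_i(E'')}{2}\le\uu|E''|$. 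Equivalently, every subset $\vecset'\subseteq\vecset$ satisfies $e(\vecset')\le\uu|\vecset'|$, so by the classical Hakimi orientation theorem the induced subgraph $\johnson(\tsetsize,2)[\vecset]$ admits an orientation of maximum outdegree at most $\uu$; extending it by directing each edge between $\vecset$ and its complement outward from $\vecset$, and orienting the remaining edges arbitrarily, produces an orientation of the whole Johnson graph under which every vertex of $\vecset$ is $\uu$-light.

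The only step I expect to require real care is the construction of a subgraph of $K_\tsetsize$ with exactly $\lfloor\tsetsize(\uu+1)/2\rfloor$ edges and maximum degree $\uu+1$, for which one must check the four parity combinations of $\tsetsize$ and $\uu+1$ separately; the rest of the argument is a clean interplay between the identification of $\johnson(\tsetsize,2)$ with the line graph of $K_\tsetsize$, a Cauchy--Schwarz lower bound on $\sum\binom{\degree_i}{2}$, and the classical Hakimi orientation criterion, with the floor value $\lfloor\tsetsize(\uu+1)/2\rfloor$ matched in both directions thanks to the balanced degree sequence used in the construction.
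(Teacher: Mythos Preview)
Your approach is essentially the paper's own: the identification of $\johnson(\tsetsize,2)$ with the line graph of $K_\tsetsize$ (the paper writes this implicitly via the column sums $\psi_i$), the same double-counting bound $\sum_i\binom{\degree_i}{2}\le\uu R$ for the upper bound, and a circulant-type near-regular subgraph of $K_\tsetsize$ for the construction. Two small remarks. First, a slip in your extension step: edges between $\vecset$ and its complement must be directed \emph{into} $\vecset$, not ``outward from $\vecset$'', since otherwise they would add to the outdegree of the $\vecset$-vertices you are trying to keep $\uu$-light. Second, Hakimi's criterion is more machinery than needed: once every $K_\tsetsize$-vertex has degree at most $\uu+1$ in $E'$, every line-graph vertex $\{i,j\}$ has degree at most $(\degree_i-1)+(\degree_j-1)\le 2\uu$ in the induced subgraph, and the paper's Fact~\ref{fact} (Eulerian-type orientation for graphs of maximum degree $\le 2\uu$) already yields a $\uu$-light orientation of that subgraph directly.
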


\begin{proof}
	Under hypotheses $m= \left\lfloor n(W+1)/2\right\rfloor$,  $W+1<n-1$,
	$G(n,m,\uu)$ will be defined as a spanning subgraph of the complete graph $K_n$ on the group $\mathbb{Z}_n$ of integers $0,\ldots,n-1$ with addition modulo $n$.
	The orientation of the line graph of $G(n,m,\uu)$ will be defined by an orientation of the line graph of $K_n$. 
	For this purpose we need an auxiliary directed graph $C_n$ on $\mathbb{Z}_n$, in which the arrows are the couples $(i, i+1)$, addition understood mod $n$ (this graph is a directed cycle). 
	Now in the line graph of $K_n$, -- where the vertices are couples ${u,w}$ of elements of $\mathbb{Z}_n$ --, we define an arrow going from $\{x,y\}$ to $\{y,z\}$,  -- assuming $x,y,z$ are three distinct elements of $\mathbb{Z}_n$, -- whenever $y$ is on the unique shortest directed path from $x$ to $z$.
	
	The definition of $G(n,m,\uu)$ depends on the parity of $n$ and $\uu$, according to which we give the definition in three cases:
	
	Case 1. If $\uu$ is odd, then the edge set of $G(n,m,W)$ consists of those couples ${x,y}$ for which $x-y$ or $y-x$ is congruent to one of the $(\uu+1)/2$ integers $1,\ldots,(\uu+1)/2$ modulo $n$.
	
	Case 2. If $\uu$ is even and $n$ is even, then the edge set of $G(n,m,\uu)$ consists of those couples ${x,y}$ for which $x-y$ or $y-x$ is congruent to one of the $\uu/2$ integers $1,\ldots,\uu/2$ modulo $n$ or $y$ is congruent to $x+n/2$.
	
	Case 3. If $\uu$ is even and $n$ is odd, then the edge set of $G(n,m,\uu)$ consists of those couples ${x,y}$ for which $x-y$ or $y-x$ is congruent to one of the $\uu/2$ integers $1,\ldots,\uu/2$ modulo $n$  or $x$ is congruent to one of the integers $1,\ldots,(n-1)/2$ and $y$ is congruent to  $x+(n-1)/2$.
	
	This proves that for any $\uu$ the graph $J(n,2)$ can be so oriented, that $\left\lfloor{(\uu+1)\tsetsize/2}\right\rfloor$ vertices of $J(n,2)$ will have outdegree at most $\uu$.
	
	It remains to see that $J(n,2)$ has no orientation in which $k > (\uu+1)n/2$ vertices have outdegree at most $\uu$. Indeed, the set of such vertices constitute the edge set of a spanning subgraph $G$ of the complete graph $K_n$ on vertices. The number $k$ of edges of $G$ is $n/2$ multiplied by the average degree of $G$, implying that the average degree exceeds $\uu+1$, which is impossible according to the Lemma~\ref{avgdegreelemma}. 
\end{proof}

Since the largest number of pairwise non-incident edges of the complete graph $K_n$ on $n$ vertices is the integer part of $n/2$, Proposition~\ref{boundary2} is obvious for $\uu=0$, coinciding with the obvious case for $w=2$ of a result referred to as the Johnson bound. Surprisingly, Johnson's original proof technique (\citet{Johnson1962bound}) also gives inequalities formally analogous to the Johnson bound that we shall establish in the next Proposition. This generalization from $\uu=0$ to positive $\uu$ does not in any way tighten the Johnson bound for constant weight codes (which to our knowledge has only been sharpened  under particular conditions, restricted to special cases, see e.g. in \cite{agrell2000upper,etzion2014new,le2019simple}). Thus the following inequalities are, and expected to remain, unsharp in the same sense that the Johnson bounds are only estimates.

\begin{proposition}
	\begin{align}\label{jb1}
		\maxulight(\uu,\tsetsize,\posc)&\leq\left\lfloor\frac{\tsetsize}{\posc}\maxulight(\uu,\tsetsize-1,\posc-1)\right\rfloor\\
		\label{jb2}
		\maxulight(\uu,\tsetsize,\posc)&\leq\left\lfloor\frac{\tsetsize}{\tsetsize-\posc}\maxulight(\uu,\tsetsize-1,\posc)\right\rfloor
	\end{align}
\end{proposition}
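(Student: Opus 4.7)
The plan is to adapt the classical double-counting argument that \citet{Johnson1962bound} used for $\cwc(\tsetsize,4,\posc)$ to the $\uu$-light setting. I would fix a $\uu$-light code $\vecset\subset\cwwset(\tsetsize,\posc)$ of maximum size, i.e., $|\vecset|=\maxulight(\uu,\tsetsize,\posc)$, together with an orientation $\Lambda$ of $\johnson(\tsetsize,\posc)$ witnessing that every $\binseq\in\vecset$ is $\uu$-light. For bound~(\ref{jb1}) and each coordinate $i\in\{1,\ldots,\tsetsize\}$, set $\vecset_i=\{\binseq\in\vecset\mid\binseq_i=1\}$; for bound~(\ref{jb2}), set instead $\vecset_i=\{\binseq\in\vecset\mid\binseq_i=0\}$. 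Double counting then gives $\sum_{i=1}^{\tsetsize}|\vecset_i|=\posc\,|\vecset|$ in the first case and $(\tsetsize-\posc)\,|\vecset|$ in the second, since every codeword has exactly $\posc$ ones and $\tsetsize-\posc$ zeros.

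The crux is to show that, after deleting coordinate $i$, the projected set $\vecset_i'$ is again a $\uu$-light code in a smaller Johnson graph: $\johnson(\tsetsize-1,\posc-1)$ in case~(\ref{jb1}) and $\johnson(\tsetsize-1,\posc)$ in case~(\ref{jb2}). This splits into two elementary observations. First, the subgraph of $\johnson(\tsetsize,\posc)$ induced on $\{\binseq\mid\binseq_i=1\}$ (resp.\ $\{\binseq\mid\binseq_i=0\}$) is isomorphic, via coordinate deletion, to the whole smaller Johnson graph, because the edges of the induced subgraph correspond exactly to transpositions $(j\;k)$ with $j,k\neq i$, which act identically on the projected words and are precisely the edges of the smaller graph. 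Second, restricting $\Lambda$ to this induced subgraph yields an orientation $\Lambda'$ of the smaller Johnson graph, and the outdegree of any vertex under $\Lambda'$ is at most its outdegree under $\Lambda$, because restriction can only remove outgoing arcs. Hence every vertex in $\vecset_i'$ has $\degree^+_{\Lambda'}$-value at most $\uu$, so $\vecset_i'$ is $\uu$-light of size $|\vecset_i|$ and therefore $|\vecset_i|\leq\maxulight(\uu,\tsetsize-1,\posc-1)$ (resp.\ $\maxulight(\uu,\tsetsize-1,\posc)$).

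Combining the two ingredients, bound~(\ref{jb1}) follows from $\posc\,|\vecset|=\sum_i|\vecset_i|\leq\tsetsize\,\maxulight(\uu,\tsetsize-1,\posc-1)$ and bound~(\ref{jb2}) from $(\tsetsize-\posc)\,|\vecset|\leq\tsetsize\,\maxulight(\uu,\tsetsize-1,\posc)$; dividing and using integrality of $|\vecset|$ to introduce the floor yields the claimed inequalities. I do not anticipate a genuine obstacle: the only place where the argument departs from the $\uu=0$ case is the verification that restricting an orientation preserves the outdegree bound, which is immediate from the definition of outdegree, so the structure of Johnson's original proof goes through with virtually no modification.
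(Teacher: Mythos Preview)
Your proposal is correct and follows essentially the same double-counting argument as the paper: both restrict to codewords with a fixed coordinate value, observe that the resulting set is a $\uu$-light code in $\johnson(\tsetsize-1,\posc-1)$ (resp.\ $\johnson(\tsetsize-1,\posc)$) because restricting the witnessing orientation can only decrease outdegrees, and then double count the ones (resp.\ zeros) in the code matrix. The only cosmetic difference is that the paper singles out the column with the most ones before bounding it, whereas you bound every column directly---your phrasing is arguably cleaner, and your explicit remark that the induced subgraph is isomorphic to the full smaller Johnson graph makes precise a step the paper leaves implicit.
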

\begin{proof}
	Let us form a $\maxulight\times\tsetsize$-matrix $M$ whose rows are the binary vectors corresponding to the $\uu$-light vertices of an oriented Johnson graph $\johnson(\tsetsize, \posc)$ with $\maxulight=\maxulight(\uu,\tsetsize,\posc)$ $\uu$-light vertices under an orientation $\Lambda$. Next, let us consider the column that has the largest number of ones on $M$ and let $\psi$ be the number of ones on this column. Then, let us delete this column and all such rows that have a zero in this column from $M$. Now, the remaining $\psi\times(\tsetsize-1)$ sub-matrix contains a sub-set of vertices of a Johnson graph $\johnson(\tsetsize-1, \posc-1)$, a sub-graph of $\johnson(\tsetsize, \posc)$. The vertices listed in this sub-matrix are $\uu$-light also on $\johnson(\tsetsize-1, \posc-1)$ under the orientation $\Lambda$, and hence $\psi\leq \maxulight(\uu,\tsetsize-1,\posc-1)$.

	Each of the $\maxulight$ rows has weight $\posc$, so the overall number of ones in $M$ is $\maxulight\posc$. But each column contains at most $\psi$ ones, so $\maxulight\posc\leq\tsetsize\psi$. Therefore $\maxulight\posc\leq\tsetsize \maxulight(\uu,\tsetsize-1,\posc-1)$, which proves (\ref{jb1}). The bound (\ref{jb2}) is obtained via considering the zeros instead of ones.
\end{proof}

Inequality (\ref{jb1}) should be viewed as a rate-of-increase bound the sequence of values $L(W,n,w)$ obtained when $W$ is fixed and $n$ and $w$ are varied by increasing both simultaneously by $1$ at each step. Similarly, inequality (\ref{jb2}) should be viewed as a rate-of-increase bound on the sequence $L(\uu,n,w)$ indexed by $n$ alone, both $W$ and $w$ being fixed. One can obtain an upper bound for each particular $L(\uu,n,w)$ by repeatedly applying (\ref{jb1}) and (\ref{jb2})  and using the already known exact values, such as the ones presented in Proposition~\ref{boundary2}.

\begin{remark}
	The bound provided by inequalities (\ref{jb1})-(\ref{jb2}) is not tight. For example, for $w=2$ inequality (\ref{jb1}) means that the ratio of increase  $L(\uu, n, 2) /  L(\uu, n-1, 1)$ is bounded by $n/2$, while for odd $W$ the exact value of this ratio is $$n(\uu+1) / 2 (2\uu+1) = (n/2 ) (\uu+1)/(2W+1)$$ 
	For $w=2$ inequality (\ref{jb2}) means, for odd $W$, that the ratio  $L(W, n, 2) /  L(W, n-1, 2)$  is bounded by $n/(n-2)$, while the exact ratio is $n/(n-1)$ by Proposition \ref{boundary2}, which bounds are asymptotically equivalent as $n$ tends to infinity. We do not know how tight is (\ref{jb2}) for higher values of $w$, as we do not have exact expressions for $L(W,n,w)$ in that case.
\end{remark}

Next we shall generalize to light codes another classical result appearing in  \citet{Graham1980lower} on lower bounds for distance $4$ constant weight code sizes.

Recall that for any underlying set $U$ of $n$ elements and positive integer $w$ the Johnson graph $J(n,w)$ can be represented as a graph that has as vertex set the $w$-subsets of $U$, two of these being adjacent in $J(n,w)$ whenever their symmetric difference has two elements. Clearly whatever the set $U$ of cardinality $n$ is, the structure of the graph $J(n,w)$ will be the same. However, what the set $U$ is matters in the sense that an algebraic structure on $U$ may provide the means of defining large independent or $\uu$-light sets of vertices of the Johnson graph. Elementary group theory can offer such structures.

Corresponding to the case $\uu=0$, the following seminal method is apparent in  \citet{bose1978codes} and also in  \cite{Graham1980lower}. Take $U$ to be the cyclic group $\mathbb{Z}_n$, and consider the function $\tau$ associating to each $w$-subset of $U=\mathbb{Z}_n$ the sum of its elements in $\mathbb{Z}_n$. For each $j \in \mathbb{Z}_n$ the set of those $w$-subsets that are mapped by $\tau$ to $j$ is independent in $J(n,w)$, and the largest of these has necessarily at least $\binom{n}{w}/n$ elements. Neither this method nor its extensions are guaranteed to yield the largest independent or $\uu$-light sets in $J(n,w)$.

It was already pointed out by  \citet{mceliece1980constantinrao}, and also by \citet{klove1981bound} that instead of $\mathbb{Z}_n$ taking $U$ to be another, non-cyclic commutative group $G$ of order $n$ may yield larger independent sets in $J(n,w)$. The following example already shows this. For $n=8, w=3$  and taking for $U$ the direct product $\mathbb{Z}_2\times \mathbb{Z}_4$, there are only 4 (four) $3$-subsets of $G$ whose sum is $(1,1)$. Therefore for some $(a,b) \in \mathbb{Z}_2 \times\mathbb{Z}_4$ there must be more than $\binom{8}{3}/8=7$  of those $3$-subsets whose sum is $(a,b)$. However, for any $j \in \mathbb{Z}_8$ the number of those $3$-subsets of $\mathbb{Z}_8$ whose sum is $j$ is exactly $7$. 

\subsection{Extended Bose-Rao construction}\label{bosesubsection}

For obtaining large $\uu$-light sets in $J(n,w)$ the method appearing in in  \citet{bose1978codes} and  in  \cite{Graham1980lower} can be further extended as follows, under the assumption that $\uu$ is at most $n/4$. Since $J(n,w)$ and $J(n, n-w)$ are always isomorphic, we assume that $w$ is at most $n/2$. 

For a fixed $\uu$ not exceeding $n/4$ the extended method consists of constructing a set $S$ of vertices in $J(n,w)$  inducing a subgraph in which all vertices have degree at most $d=2\uu$, and which set is therefore easily seen to be $W$-light. (Because we can add edges to the subgraph induced by $S$ to obtain an Eulerian graph $H$ that we can orient according to an Euler tour.)  Let $m=n-d$. 

Let $G$ be a commutative group, finite or infinite but of cardinality at least $2m$, having a congruence $C$ such that the quotient $G/C$ is of finite cardinality $m=n-d$. 
For example, $G$ can be the integers $\mathbb{Z}$ and $C$ the congruence $\mod m$ on $\mathbb{Z}$. Or $G$ can be the direct product $\mathbb{Z}_m \times \mathbb{Z}_2$ with the congruence induced by the  projection to the first factor $\mathbb{Z}_m$.

Let $N$ be a set of $n$ elements of $G$ meeting each congruence class in either $1$ or $2$ elements. Obviously $N$ meets $d$ classes in $2$ elements, the other classes in $1$ element.
We define a \textbf{link} as a $2$-subset of $N$ contained in a congruence class. The number of links is $d$.

Consider the Johnson graph $J(n,w)$ whose underlying set is $N$. Let $V$ denote the set of  $w$-subsets of $N$. 
Consider the  map $\tau$ associating to each $X \in V$ the $C$-class of the sum in $G$ of the elements of $X$.
If $X$ is any member of the reverse image $K$ under $\tau$ of any member of $G/C$, then in the Johnson graph under consideration the neighbors of $X$ within $K$ are precisely the symmetric sums of $X$ with those links that intersect $X$ in a singleton.
One of these reverse image sets must have cardinality at least $\binom{n}{w}/m=\binom{n}{w}/(n-d)=\binom{n}{w}/(n-2\uu)$. Consequently we have:

\begin{proposition}\label{lowerbounds}
	\[
	\maxulight(\uu,\tsetsize, \posc)\geq\frac{1}{\tsetsize-2\uu}\dbinom{\tsetsize}{\posc}
	\textnormal{ if }\tsetsize\geq 4\uu
	\]\hfill $\blacksquare$
\end{proposition}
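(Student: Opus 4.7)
The plan is to generalize the Graham--Sloane argument, which for $\uu=0$ uses a cyclic--shift invariant together with pigeonhole to produce distance-$4$ constant weight codes. Define the map $\sigma : \cwwset(\tsetsize,\posc) \to \mathbb{Z}/(\tsetsize - 2\uu)\mathbb{Z}$ by $\sigma(\binseq) = \sum_{i=1}^{\tsetsize} i\,\binseq_i \pmod{\tsetsize - 2\uu}$. Since $\sigma$ partitions $\cwwset(\tsetsize,\posc)$ into $\tsetsize - 2\uu$ fibers, some fiber $R_r = \sigma^{-1}(r)$ satisfies $|R_r| \geq \binom{\tsetsize}{\posc}/(\tsetsize - 2\uu)$. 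The goal is to exhibit an orientation of the Johnson graph $\johnson(\tsetsize,\posc)$ under which every vertex of $R_r$ is $\uu$-light.

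Second, I would bound the degree of the subgraph of $\johnson(\tsetsize,\posc)$ induced by $R_r$. If $\binseq \in R_r$ and $\binseq' = (i\;j)\cdot\binseq \in R_r$ with $\binseq_i = 1$ and $\binseq_j = 0$, a direct calculation gives $\sigma(\binseq') - \sigma(\binseq) \equiv j - i \pmod{\tsetsize - 2\uu}$, so $j - i$ is a nonzero multiple of $\tsetsize - 2\uu$ in $\{-(\tsetsize-1),\ldots,\tsetsize-1\}$. The hypothesis $\tsetsize \geq 4\uu$ ensures $2(\tsetsize - 2\uu) \geq \tsetsize$, so the only admissible values are $\pm(\tsetsize - 2\uu)$. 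The in-fiber neighbors of $\binseq$ therefore come only from the $2\uu$ position pairs $\{a,\, a + \tsetsize - 2\uu\}$ with $a \in \{1,\ldots,2\uu\}$, and each such pair contributes a neighbor exactly when $\binseq_a \neq \binseq_{a + \tsetsize - 2\uu}$. Hence every vertex in $R_r$ has degree at most $2\uu$ in the induced subgraph.

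Third, I would invoke Fact~\ref{fact} on the induced subgraph: since its maximum degree is at most $2\uu$, there is an orientation of its edges making every vertex of $R_r$ have out-degree at most $\uu$ within $R_r$. I would then extend this to a full orientation of $\johnson(\tsetsize,\posc)$ by directing every edge joining $R_r$ to its complement \emph{into} $R_r$, and orienting all edges lying entirely outside $R_r$ arbitrarily. Under this combined orientation, every $\binseq \in R_r$ has out-degree at most $\uu$ overall, so $R_r$ is a $\uu$-light code, yielding $\maxulight(\uu,\tsetsize,\posc) \geq |R_r| \geq \binom{\tsetsize}{\posc}/(\tsetsize - 2\uu)$.

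The main obstacle is the degree estimate in the middle step: the choice of modulus $\tsetsize - 2\uu$ (rather than $\tsetsize$, as in the original Graham--Sloane argument) is precisely calibrated so that the hypothesis $\tsetsize \geq 4\uu$ eliminates all higher multiples $\pm k(\tsetsize - 2\uu)$ with $|k| \geq 2$; without this condition, additional neighboring pairs would inflate the degree past $2\uu$ and Fact~\ref{fact} would no longer apply. Once the correct modulus is identified, everything else is routine pigeonhole and graph orientation.
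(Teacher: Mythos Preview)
Your proposal is correct and follows essentially the same route as the paper: the same Graham--Sloane map $\sigma(\binseq)=\sum_i i\,\binseq_i \bmod (\tsetsize-2\uu)$, the same degree bound of $2\uu$ on each fiber via the observation that adjacency forces $j-i\equiv 0\pmod{\tsetsize-2\uu}$ and the hypothesis $\tsetsize\geq 4\uu$ leaves only the $2\uu$ pairs $\{a,\,a+\tsetsize-2\uu\}$, and the same appeal to Fact~\ref{fact} for the orientation. Your write-up is in fact tidier on two points the paper leaves implicit---the pigeonhole step selecting a large fiber, and the extension of the orientation to all of $\johnson(\tsetsize,\posc)$ by directing cross edges into $R_r$---while the paper additionally remarks that each fiber's induced subgraph is (a subgraph of) a hypercube of dimension $\leq 2\uu$, a structural observation you do not need for the bound.
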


The lower bound  $\binom{n}{w}/(n- 2\uu)$   on the size of the largest $W$-light set of vertices in $J(n,w)$ is not sharp, in several senses, as shown in the case $n=8, w=2, \uu=2$. 

First, in this case the lower bound  $\binom{n}{w}/(n- 2\uu)=7$.  However, using the integers $\mathbb{Z}$ as $G$ and letting $C$ be the congruence mod $4$, letting $N$ to be the set of positive integers $1,\ldots,8$, and examining the map $\tau$ associating to each two-element subset of $N$ their sum in $G/C$ (isomorphic to $\mathbb{Z}_4$), the reverse image of the $C$-class of $1$ will contain $8$ pairs, which constitute a $2$-light set of vertices in $J(8,2)$.

Second, it is not difficult to see that the size $8$ of the resulting $2$-light set in $J(8,2)$ cannot be improved by the same method if we choose $\mathbb{Z} \times \mathbb{Z}$ as $G$ and $C$ such that $G/C$ is the other group of order $4$, the Klein group $\mathbb{Z}_2 \times \mathbb{Z}_2$.

Third, a $2$-light set of $12$ vertices can be constructed in $J(8,2)$ as follows. Consider the graph of the $3$-dimensional cube $K$ with 8 vertices and $12$ edges. Each edge defines a $2$-subset of the set $U$ of the vertices of the cube. To each vertex $v$ of $K$ associate the set of the $3$ edges $\{a,b,c\}$  incident with $v$, denote this set by $E(v)$. It spans in $J(8,2)$ a triangle, on which we can place a circular orientation. Each member of $E(v)$ will have a unique successor in this circular order (a small regular tournament). An edge of the subgraph of $J(8,2)$ spanned by $12$ edges of $K$ is defined by a pair $\{e,b\}$  of incident edges belonging to a unique set $E(v)$. In $J(8,3)$ orient the edge $\{e,b\}$  from $e$ to $b$ if $b$ is the successor of $e$ in this $E(v)$. This can be completed to an orientation of $J(8,3)$ in which all the $12$ vertices corresponding to edges of the cube have outdegree at most $2$.

\section{Simulations}\label{simusection}

In our analysis, we have characterized some constraints for the shape of null distributions of learning algorithms under the null hypothesis (\ref{nullhypoeq}). These constraints, together with the supremum approach of Theorem~\ref{suptheorem}, provides us possibilities for designing statistical tests for this hypothesis. However, for the sake of discussion, we first go through certain interesting special cases of learning algorithm families.

Firstly, we recollect the class of constant learning algorithms (\ref{constantalgo}) whose null distribution coincides with that of the Wilcoxon distribution. We also note that if the learning algorithm is stable in the sense small changes in the training set do not affect to the inferred prediction function too much, the null distribution is likely to be the same as that of the standard WMW-U test. As the exact shape of the Wilcoxon distribution (depicted in Figure~\ref{figureofwilcoxondistribution}) is known and can be calculated via the recursion (\ref{wmwdistr}), we can construct statistical tables consisting of the critical values as in (\ref{critvaldef}), as illustrated for $\alpha=0.05$ in Figure~\ref{wmwcriticals}.

Secondly, if we consider the LPO-equivalence classes of learning algorithms as in (\ref{lpoeqclass}) that correspond to the orientations of the Johnson graphs as stated in Proposition~\ref{thelink}, it is easy to conclude that the most of the equivalence classes have the null distribution concentrated around 0.5 much more heavily that of the Wilcoxon distribution. For example, we may consider what we call a random learning algorithm that uses the training set as a random seed to infer prediction for the two held out points. This corresponds to drawing a random orientation for the Johnson graph corresponding to the LPO behavior of the learning algorithm. However, while most of the equivalence classes do have this type of a low variance null distribution, most of the learning algorithms are completely useless in practice, and hence this does not help us in designing a practical test. These are, nevertheless, interesting in the sense that in their Johnson graph there would be no vertices with zero or even a reasonably low number of pairwise errors, so obtaining a good result is not possible even in principle. This is elaborated more in Section~\ref{disc}.

Thirdly, if the learning algorithm coincides with a maximal size constant weight code or a maximal size light constant weight code with a specific lightness parameter $\uu$, the null distribution can be very extreme. However, it is very unlikely that a practical learning algorithm would coincide with a maximal code unless a malicious adversary has especially designed it to have high changes to produce a LPOCV results of exactly $\uu$ pairwise errors under the independence hypothesis. For example, if we consider the largest $\uu$-light codes that can be constructed with the method provided by Proposition~\ref{lowerbounds}, we get a table of critical values as illustrated in Figure~\ref{grahansloanecriticals}. We observe that a test based on the table would be far more conservative than WMW test, and hence it may be impractical in real-world studies. 

\subsection{Critical values based on empirical estimates}

\begin{figure}
\centering
\includegraphics[width=0.9\linewidth]{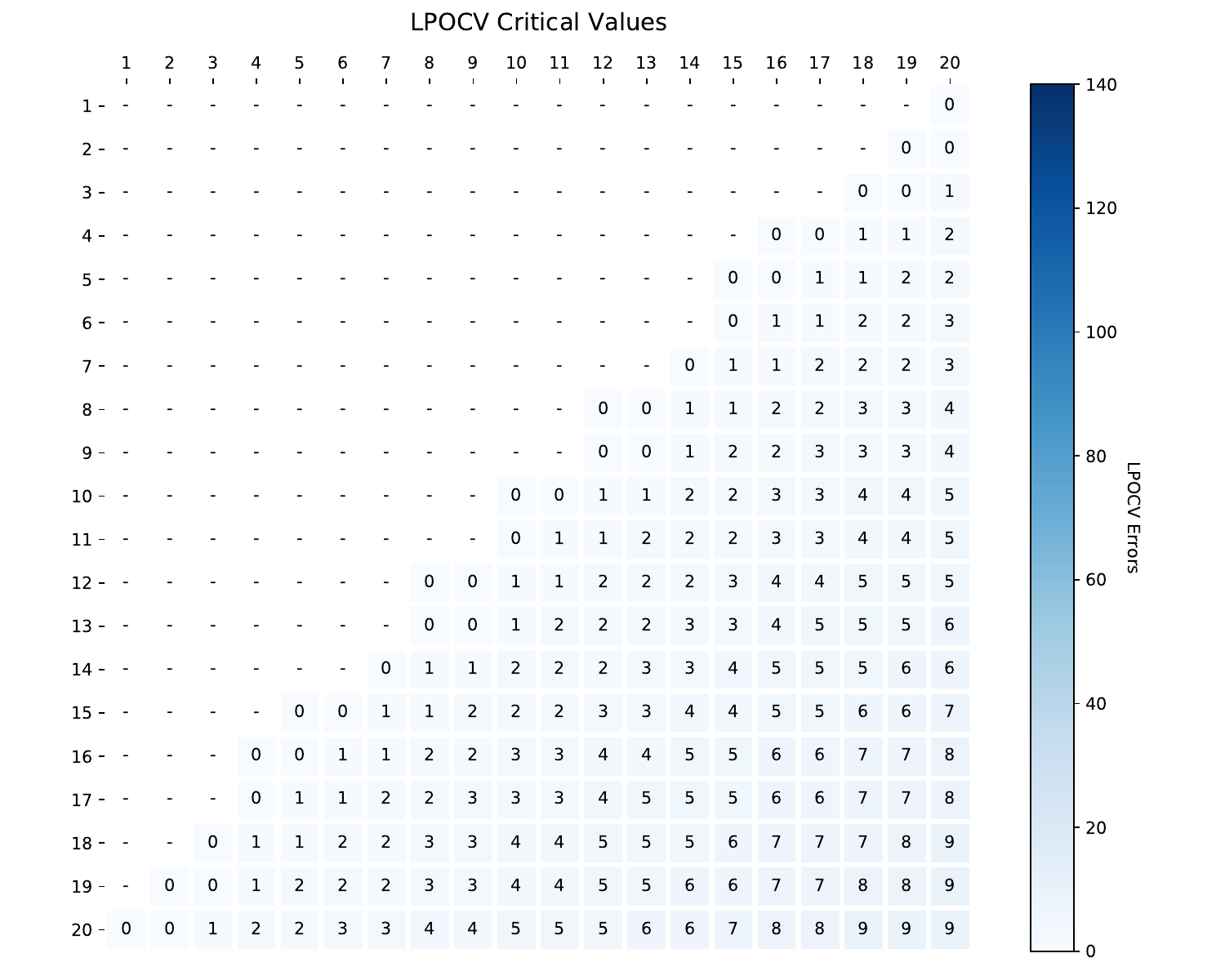}
\caption{Critical values based on the Graham-Sloane type of bound. The color scale is the same as that used to depict the critical values of the ordinary WMW test in Table~\ref{wmwcriticals}.}
\label{grahansloanecriticals}
\end{figure}

\begin{figure}
\centering
\includegraphics[width=0.9\linewidth]{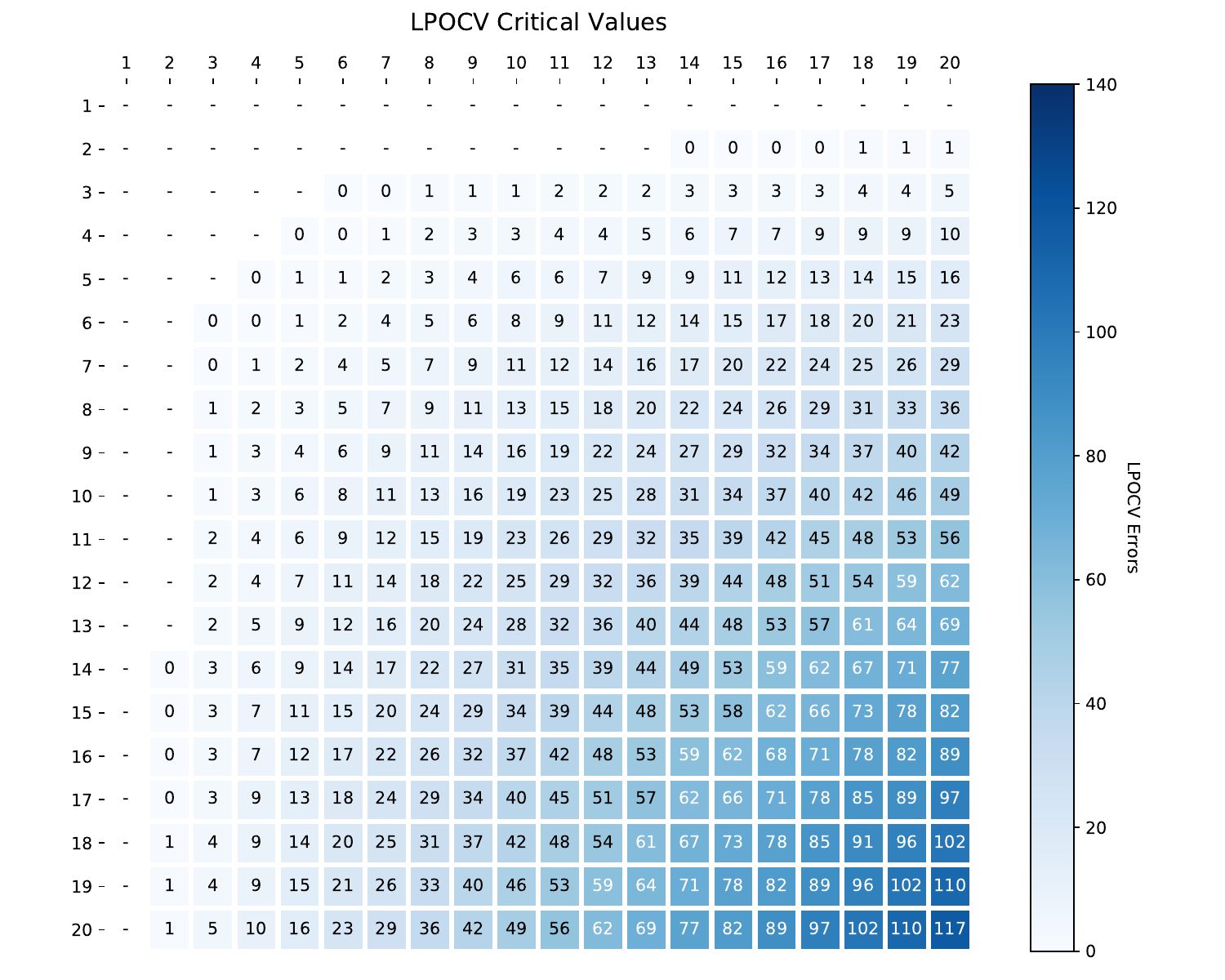}
\caption{Empirical table of critical values obtained using the supremum approach over a limited set of experiments.}
\label{empiricalcriticals}
\end{figure}

Next, we consider the typical behavior of some ``useful'' learning algorithms with both simulated and real-world data. Note that the experiments are not intended to provide an exhaustive empirical evaluation, 
but rather a case study and starting point for more comprehensive investigations that are outside the scope of this paper.

As discussed above, the supremum approach of Theorem~\ref{suptheorem} would provide us the maximal size codes that may never materialize in practical applications, and hence we would end up with an overly conservative test. Instead of taking the supremum over all possible learning algorithms for which the null hypothesis holds, we now explore the usefulness of the following empirically estimated p-value:

\begin{align}\label{empest}
p_\lalg(\labelseq)=\sup_{\lalg'\in C}\operatorname{\widehat{P}}_{\binvar\sim\textnormal{Uni}(\cwwset(\tsetsize,\posc))}(\uval_{\lalg'}(\binvar)\leq\uval_\lalg(\labelseq))\;,
\end{align}
where $C$ is a set of learning algorithms and $\widehat{P}$ is an empirical probability estimate obtained by sampling both $\indset$ and $\binseq$.

To realize the set of learning algorithms, we choose certain representative examples, ridge regression (RR) and k-nearest neighbors (kNN), on different types of simulated data. RR is a typical example of a linear and parametric method, while kNN is both non-linear and non-parametric. For both, cross-validation estimates can be computed very efficiently, which makes running the large number of repetitions needed in the simulations computationally feasible. We recall that a learning algorithm in this paper refers to a combination of the sequence of data $\indset$ and a method such as RR rather than to the RR method only. That is, RR applied on two different sequences of data should be considered as two different learning algorithms. The RR experiments are run using the fast and exact LPOCV algorithm \citep{pahikkala2008exact} implemented in the RLScore library \citep{pahikkala2016rlscore}, for kNN scikit-learn implementation \citep{pedregosa2011scikit} is used.

Following a similar setup for generating data as in our previous study \citep{Montoya2018tlpocv}, we performed a set of simulations on four different distributions of synthetic data. Here, the experiments are carried out with both classes drawn from the same distribution, indicating that the null hypothesis of independence between the class labeling and learning algorithm holds. The first two of the distributions are one and ten dimensional standard normal distributions, and the other two are one and ten dimensional mixtures of two normal distributions. In total, eight different settings for the simulations were considered: ridge regression with regularization parameter value set to 1 and kNN with the number of neighbors set to 3 were both run with the four types of synthetic data distributions. Further details of the data and experiments are available at the repository containing the program codes for the experiments.\footnote{\url{https://gitlab.utu.fi/parmov/u-test-with-leave-pair-out-cross-validation}}

To obtain the critical values for the significance level $0.05$ based on the empirical p-values (\ref{empest}), we let the number of data labeled with 0 and 1 both range from 1 to 20, and experiments with each of the eight learning setups were repeated 10,000 times. In each repetition a new sample with same characteristic was drawn and the LPO errors were counted. The resulting table of critical values is illustrated in Figure~\ref{empiricalcriticals}.

Each setting resulted in a table 20x20 of critical values. These tables were then merged by selecting the smallest value within the experiments with same number of positives and sample size to obtain the LPO critical values in Figure~\ref{empiricalcriticals}.  

\subsection{Quality of LPO critical values}

We evaluated the quality of LPO critical values in rejecting a false null hypothesis by performing a set of experiments with linearly and non-linearly separable synthetic data, as well as using real medical data. These experiments consisted in computing the proportion of type II error (i.e. failure to reject a false null hypothesis) for LPO critical values in settings where ridge regression and kNN were used on both synthetic and real data.

In the experiments, we used eight different sample sizes (i.e. 12, 16, 20, 24, 28, 32, 36 and 40) with the fraction of positives and negatives set to 50\%. In the linearly separable synthetic data, we considered 10 features with one or four features having signal. In the signal feature, one class was drawn from a normal distribution with 0.5 mean and variance 1, while for the other class the mean was -0.5. For the non-linearly separable data sets, the feature with signal was drawn from three normal distributions. The first class was drawn from a normal distribution with mean 0.5 and variance 1 and the second class either from a normal distribution with mean 5.5 or with -4.5 mean equally likely. 

In addition to the synthetic data, we also performed experiments with a real medical data set involving prostate magnetic resonance imaging (MRI). More precisely, the data consisted of diffusion weighted imaging (DWI), a widely used modality in detecting prostate cancer, of 20 patients with histologically confirmed prostate cancer in the peripheral zone. This data has been used in prior studies \citep{Jambor2015Evaluation,toivonen2015mathematical,merisaari2015mathematical} for development and validation of post-processing methods. In these experiments, we used this data to classify DWI voxels as cancerous or non-cancerous. The DWI data set consisted of 85876 voxels, of which 9268 were labeled positive, obtained from parametric maps of the 20 patients with confirmed prostate cancer. Each voxel was associated with six features that are known to be linked with cancer \citep{toivonen2015mathematical,merisaari2015mathematical,montoya2015detecting,Langer2009,Ginsburg2011Variable}. For more details on the data, we refer to our previous study \citet{Montoya2018tlpocv}.

Figure~\ref{typeIIerrorplots} presents the results of our experiments in the four different settings previously described. In case of synthetic linearly separable data (Figure \ref{typeIIerrorplots}.a and Figure \ref{typeIIerrorplots}.b), the proportion of type II error decreases as the sample size increases as expected. However, the amount of reduction of the proportion of type II error also depends on the algorithm used and the strength of the signal in the data.  In contrast, when the signal of the data is non-linearly separable (Figure \ref{typeIIerrorplots}.c) the proportion of type II error is high when a linear algorithm (i.e. ridge regression) is used independently of the size or signal in the data, opposed to using a non-linear algorithm (i.e. kNN). In a real-life setting as it is the medical data set (Figure \ref{typeIIerrorplots}.d), the proportion of type II error behaves similarly than the synthetic data with strong linearly separable data (Figure \ref{typeIIerrorplots}.b).

\begin{figure}
\centering
\includegraphics[width=0.95\linewidth]{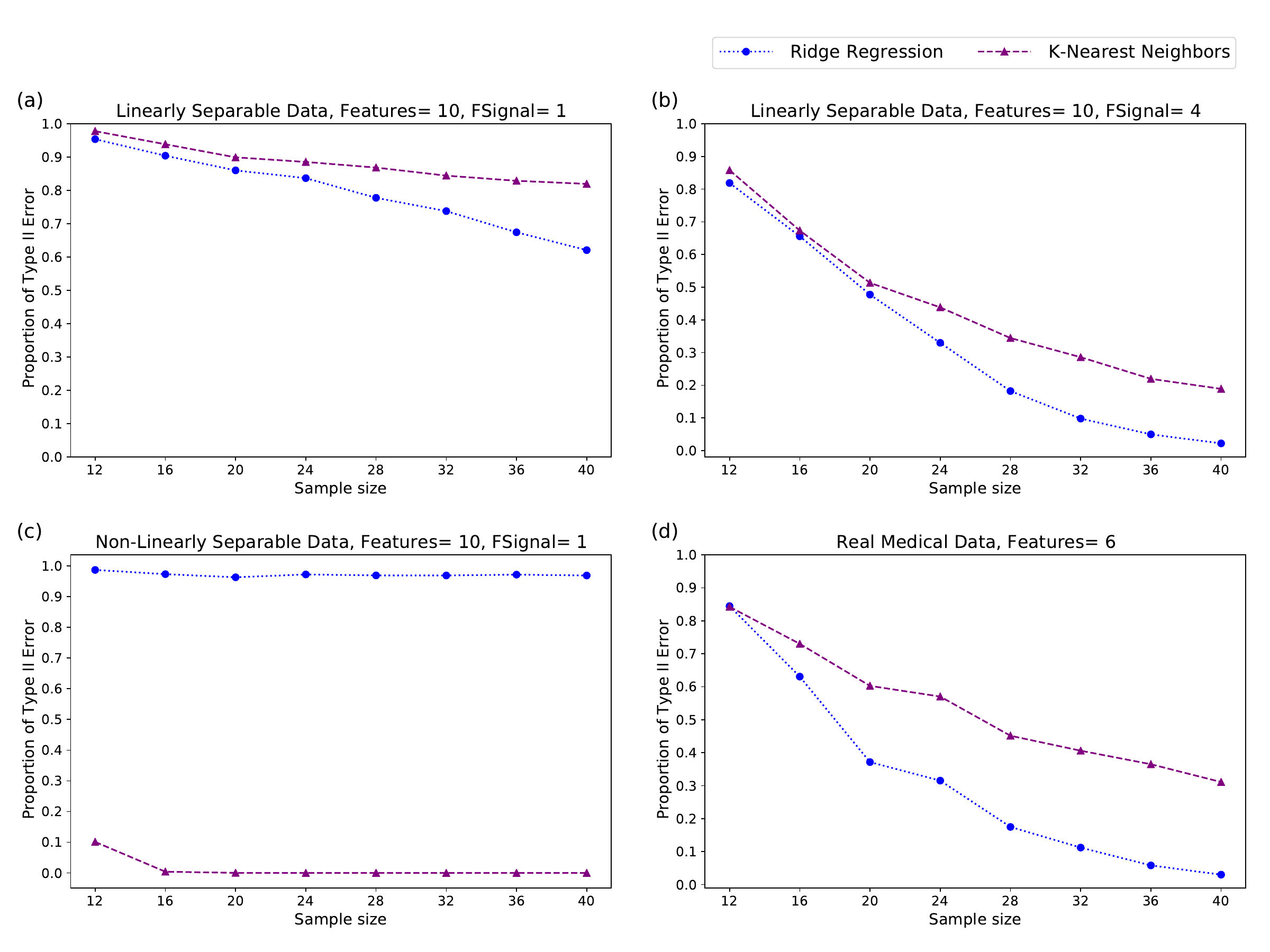}
\caption{LPOCV Proportion of Type II Errors.}
   \label{typeIIerrorplots}
\end{figure}

\section{Discussion and Conclusions}\label{disc}

We have explored the intuitive analogy between cross-validation for learning algorithms and error detecting codes and show that the analogy is also mathematically exact. In particular, we have shown that the maximal sizes of constant weight codes with Hamming distance four between code words coincide exactly with the maximal number of different binary label assignments with fixed label proportion for which a learning algorithm is able to get zero leave-pair-out cross-validation (LPOCV) error. With the help of coding and graph theoretic tools, we have extended this analysis by introducing what we call $\uu$-light constant weight codes that allow at most $\uu$ errors, and shown how these coincide with maximal number of label assignments  leading to at most $\uu$ LPOCV errors. The light codes are based on the more general analogy that, for a sample of data and its arbitrary binary labeling with a fixed label proportion, all possible LPOCV outcomes of a learning algorithm can be considered as an orientation of a Johnson graph with vertices corresponding to each possible fixed proportion binary labeling of the sample. The number of LPOCV errors of a learning algorithm for a labeling of the sample coincides with the outdegree of the corresponding Johnson graph vertex. This makes it possible to cast the analysis of learning algorithms' LPOCV behavior to a graph orientation problem, and to use graph theoretic tools to bound the maximal sizes of considered light constant weight codes.

The above results open up several directions for further research. As shown in this paper, one can design statistical tests resembling the classical Wilcoxon-Mann-Whitney test for the null hypothesis of the fixed proportion sample labeling being randomly assigned. The properties of this type of a test were analysed and also experimented with a case study involving simulations and a real data set. The behavior of typical learning algorithms in this context is a subject for more extensive studies in the future. In particular, since our analysis made no prior assumptions of the properties of the learning algorithms or the data, making such assumptions may provide us more powerful significance tests than those based on the worst case analysis known in the statistical literature as the supremum approach. In particular, since the learning algorithms can be analyzed based on their LPOCV capacities, one could ask whether some algorithm classes are genuinely better than others in this sense. The most of the Johnson graph orientations have the vertices' outdegree distribution heavily concentrated around the mean, indicating that the corresponding  learning algorithms can not have good LPOCV results with any labeling of the data. In the other extreme, one can intentionally design a learning algorithm corresponding to a light constant weight code, that has disproportionally high changes for getting a specific good LPOCV result under a random label assignment. On the other hand, as was shown with simulations, the outdegree distributions of the practical algorithms tend to be spread wider than the ordinary Wilcoxon distribution, but are still far from the extraordinary distributions corresponding to the light constant weight codes. Similarly to the classical learning theoretic analysis of learning algorithms with Vapnik-Chervonenkis dimension \citep{vapnik1995statistical} or their stability properties \citep{shalev2010learnability}, one could also analyze the LPOCV capacity of some specific algorithm classes or specific data, which could again lead to more powerful tests if the prior assumptions would lead to considerably tighter restrictions of the corresponding code types.

In this paper, we only covered LPOCV and the analogous types of constant weight codes with Hamming distance four between code words. However, similar analogues could as well be drawn between cross-validation types with larger hold-out sets than in LPOCV and codes with Hamming distance larger than four. We expect that, since the maximal sizes of such codes would be considerably smaller than those corresponding to LPOCV, this would also provide us possibilities for designing more powerful tests. This is also intuitive as the codes able to correct errors consisting of several bits are much smaller in size than the codes only correcting small errors, and the sizes of the hold-out sets correspond to the number of bits the code should correctly set. However, this research direction would require quite different mathematical tools than those considered in this paper, and hence they are left for future studies.

\section*{Acknowledgements}
This work has been funded by Research Council of Finland (grants 340182, 345804, 340140, 345805).

\bibliography{myBibliography}

\end{document}